\newtheorem{theorem}{Theorem}
\newtheorem{proof}{Proof}
\newcommand{\ie}{i.~e.\xspace}
\newcommand{\pkg}[1]{\texttt{#1}\xspace}
\begin{document}

\ecjHeader{x}{x}{xxx-xxx}{201X}{Sub-Graph-Based Mutation for Solving the Bi-Objective Minimum Spanning Tree Problem}{J.~Bossek and C.~Grimme}
\title{\bf On Single-Objective Sub-Graph-Based\\ Mutation for Solving the Bi-Objective Minimum Spanning Tree Problem}

\author{
        \name{\bf Jakob Bossek} \hfill \addr{bossek@aim.rwth-aachen.de}\\ 
        \addr{AI Methodology, Dept. of Computer Science, RWTH Aachen University, Germany}
\AND
       \name{\bf Christian Grimme} \hfill \addr{christian.grimme@wi.uni-muenster.de}\\ 
        \addr{Statistics and Optimization, Dept. of Information Systems, Univ. of M\"unster, Germany}
}

\maketitle

{
\parindent0pt
\textbf{Note} accepted for publication in the \emph{Evolutionary Computation Journal}~(ECJ)
}
\medskip

\begin{abstract}
We contribute to the efficient approximation of the Pareto-set for the classical $\mathcal{NP}$-hard multi-objective minimum spanning tree problem (moMST) adopting evolutionary computation. More precisely, by building upon preliminary work, we analyse the neighborhood structure of Pareto-optimal spanning trees and design several highly biased sub-graph-based mutation operators founded on the gained insights. In a nutshell, these operators replace (un)connected sub-trees of candidate solutions with locally optimal sub-trees. The latter (biased) step is realized by applying Kruskal's single-objective MST algorithm to a weighted sum scalarization of a sub-graph.

We prove runtime complexity results for the introduced operators and investigate the desirable Pareto-beneficial property. This property states that mutants cannot be dominated by their parent. Moreover, we perform an extensive experimental benchmark study to showcase the operator's practical suitability. Our results confirm that the sub-graph based operators beat baseline algorithms from the literature even with severely restricted computational budget in terms of function evaluations on four different classes of complete graphs with different shapes of the Pareto-front.
\end{abstract}

\begin{keywords}

Evolutionary algorithms,
multi-objective optimization,
combinatorial optimization,
minimum spanning tree problem,
biased mutation.

\end{keywords}


\section{Introduction}
\label{sec:introduction}

Evolutionary algorithms (EAs) and bio-inspired algorithms in general are very successful in solving challenging real-world problems in many fields of applications, e.g., engineering, logistics, telecommunications, and design~\citep{D12,CWM12}. Furthermore, these type of randomized search heuristics are among the most successful when it comes to solving $\mathcal{NP}$-hard multi-objective optimization problems~\citep{DPAM02,beume2007,CCC2007}.

Given an undirected and weighted graph, the Minimum Spanning Tree (MST) problem is the challenge of finding a tree that maintains connectivity of all nodes and has minimal costs among all such trees. The MST problem is a fundamental graph-theoretic, well-studied combinatorial optimization problem, which can be solved efficiently, i.e., in polynomial time, by various algorithms~\citep{Prim57,Kr56}. The problem has countless applications, e.g., in the construction of communication spanning trees~\citep{hu1974}, medical imaging~\citep{An2000}, or many other areas that range from logistic via graph drawing to power grid network design~\citep{Atallah2009}. Additionally, it is used by sophisticated algorithms as a sub-routine to solve more challenging problems (e.g., the $\nicefrac{3}{2}$-approximation algorithm by \citet{Christofides76} for the metric Traveling-Salesperson-Problem (TSP) or for Minimum-Bottleneck optimization in graphs~\citep{Gabow1988BST}).
In many applications multiple weights are assigned to each edge leading to several conflicting objectives, which require simultaneous optimization turning the problem into a multi-objective one. Unfortunately, the multi-objective MST (moMST) was proven to be $\mathcal{NP}$-hard~\citep{CGM83}. Considerable effort was made in the classical algorithm and operations research community~\citep{Ehrgott2005,RH09}. However, classical, optimal algorithms, which guarantee to obtain the whole Pareto-set, e.g., the multi-objective extension of Prim's algorithm by \citet{Corley1985}, may get trapped by the intractability of the moMST~\citep{Hamacher1994}: the number of interim solutions may grow exponentially even if the Pareto-set is small, e.g., polynomially bounded in size. These approaches can, however, be used as blueprints for designing upper bounds in multi-objective branch and bound approaches for the problem, as, e.g., proposed in the context of a general branch and bound framework by~\citet{SourdS2008}, which is applied to the bi-objective MST problem. Sadly, the  work omits some implementation details in the algorithmic description for making it reproducible.
A recent approach by \citet{santos2018} transforms the original network into a directed network in which paths trees for the original network match paths in the transformed one. The authors then apply a labelling algorithm for finding multi-objective shortest path to solve moMST instances for arbitrary objectives. Still, the investigated problem instances are rather small and source code for reproducing results is not provided.

\citet{ZG99} were the first to introduce a genetic algorithm for the moMST. Their algorithm relies on an early version of the Non-dominated Sorting Genetic Algorithm (NSGA,~\citet{SD94}) and the elegant \emph{Pr\"ufer-number} for representation: a spanning tree is encoded via an integer vector from $\{1, \ldots, n\}^{n-2}$ where $n$ is the number of nodes.\footnote{\citet{Pruefer1918} gave an alternative prove of Cayley's theorem~\citep{Cayley1889} -- a complete graph has $n^{n-2}$ distinct spanning trees -- by showing that there is a bijection between the set of spanning trees and the set of integer vectors $\{1, \ldots, n\}^{n-2}$.} This representation allowed the authors to use standard evolutionary operators. Despite its obvious appeal, the Pr\"ufer-encoding was shown to be a poor representation for spanning trees since small changes to the encoding may result in large changes of the encoded spanning tree. This was first shown empirically by \citet{KC01} and later theoretically verified by \citet{Gottlieb2001}. Both papers suggest that a direct encoding, i.e. representing a tree by its set of $(n-1)$ edges, is more beneficial. Note that a recent study by \cite{barbosa2020DataStructuresForDirectSpanningTree} investigates multiple data structures for the direct representation of very large graphs together with respective variation operators. They find their proposed data structure very efficient for graphs with $> 10\,000$ nodes but also confirm that the direct encoding in arrays is performing well for small and medium instance.

If domain or problem-specific knowledge is available it is well-known that incorporation of this knowledge into the evolutionary process can have strong benefits. This can be accomplished by heavily problem-tailored variation operators (see, e.g.,~\citet{LTHGH08}) or the integration of biased local-search procedures (see, e.g.,~\citet{Ishibuchi2009}, who developed a memetic algorithm that puts higher probability on promising solutions in the local neighborhood of the local search operator).

In terms of graph problems, \citet{RKJ2006_BiasedMutationOperators} investigated the relationship between the rank of an edge and its occurrence in optimal solutions for the single-objective versions of the MST, the degree-constraint MST, and the TSP. In their work, they derived several tailored mutation operators, which put higher selection probability on low-rank edges and show both experimentally and by means of rigorous runtime analysis that these algorithms outperform the corresponding non-tailored operators by orders of magnitudes. Recently, \citet{BNG2019} transferred Raidl's work to the multi-objective domain. The authors experimentally verified that biased mutation can speed up the evolutionary search process for the moMST. They used different multi-objective ranking schemes to alter the probability distribution of edge selection for the 1-edge-exchange mutation operator, e.g., by putting higher selection probabilities on non-dominated edges. 

In the context of bi-objective MST problems, \citet{FernandesMGG2019} propose the application of a transgenetic algorithm, where so-called transgenetic agents are responsible for exchanging solution information and recombining them to new bi-objective MST solutions using weighting of objectives and single-objective MST approaches to construct new MSTs. Previous work by the authors of this paper~\citep{BG2017ParetoBeneficial} specifically focuses on mutation operators in standard meta-heuristics and introduces a heavily problem-tailored mutation operator -- called the sub-graph mutation (SG) -- for the bi-objective MST problem. The idea originates from an analysis of Pareto-optimal solutions and can be wrapped up in a nutshell as follows: given a spanning tree, (1) select a connected sub-tree at random and (2) replace it with an optimal sub-tree with respect to a randomly selected objective. The latter is achieved by applying a single-objective MST algorithm on the sub-graph induced by the sampled sub-tree. Hence, their operator can be seen as a hybrid of mutation and local-search. Experiments reveal fast convergence towards the Pareto-front for a class of complete graphs with two real-valued objectives sampled from a uniform distribution.

Here, we will build on our conference paper~\citep{BG2017ParetoBeneficial} and extend it considerably by designing and experimentally evaluating two generalizations of the SG operator. Before we detail the contribution and structure of this work, we introduce the necessary notations and definitions in the context of the multi-objective minimum spanning tree problem next.


\section{The Multi-Objective Spanning-Tree-Problem}
\label{sec:problem}
We consider undirected weighted graphs $G = (V, E, c)$ with $n = |V|$ nodes and $m = |E|$ edges. A vector-valued cost function $c : E \to \mathbb{R}^q_{>0}, \ c(e) = (c_1(e), \ldots, c_q(e))^{\top}$ assigns a positive weight to each edge $e \in E$. Given a subset $S \subset V$ we call $G[S]$ the \emph{sub-graph induced in $G$ by $S$}, which is the graph comprising the node set $S$ and the subset of edges from $E$ for which both end nodes are in $S$.
Each acyclic, connected sub-graph $T = (V, E_T)$ with $E_T \subset E$ is termed a \emph{spanning tree} of $G$. In the following, we occasionally identify a tree by its edge set $E_T$. Moreover, we use the notation $V(G)$ and $E(G)$ to refer to the node set and edge set of a graph $G$.
Further, $\mathcal{T}$ denotes the set of all spanning trees of $G$. With slight abuse of notation\footnote{The domain of $c$ is actually the Cartesian product $V^2$ and not a set of trees.} the cost-vector of a tree $T \in \mathcal{T}$ contains the component-wise sum of edge weights of $T$, \ie, $c(T) = (c_1(T), \ldots, c_q(T))^{\top}$ with $c_i(T) := \sum_{e \in E_T} c_i(e)$. Then the problem
\begin{align*}
  \min_{T \in \mathcal{T}} c(T) = (c_1(T), \ldots, c_q(T))^{\top}
\end{align*}
is termed the \emph{multi-objective minimum spanning tree problem} (moMST). For $q = 1$, i.e., in the single-objective case, the solution is obvious. However, since there is no canonical order in $\mathbb{R}^q, q \geq 2$, we adopt the concept of Pareto-dominance~\citep{CCC2007,miettinen1999nonlinear} to define optimality in the multi-objective case. Here, the goal is to find a set of incomparable trade-off solutions, so-called \emph{efficient} or \emph{Pareto-optimal} solutions, $PS = \{T \in \mathcal{T} \, | \, \nexists\ T' \in \mathcal{T}: c(T') \preceq c(T)\}$, termed \emph{Pareto-set} and its image $c(PS) = \{c(T) \, | \, T \in PF\}$, termed the \emph{Pareto-front}. Here $\preceq$ is the \emph{dominance relation}. A solution $T$ dominates another solution $T'$, $T \preceq T'$, if $c_i(T) \leq c_i(T') \,\forall i = 1,\ldots,q$ and $\exists j \in \{1, \ldots, q\}$ with $c_j(T) < c_j(T')$.
A simple method to obtain efficient solutions to multi-objective problems is via scalarization where the original multi-objective problem is transformed into a single-objective problem. A well known scalarization method is the \emph{weighted-sum} method. Let $\lambda_1, \ldots, \lambda_q > 0$ with $\sum_{i=1}^{q} \lambda_i = 1$. Then the weighted-sum approach minimizes
\begin{align}
    \min_{T \in \mathcal{T}} \sum_{i=1}^{q} \lambda_i c_i(T),\label{eq:weighted-sum}
\end{align}
where trees which can be obtained by optimizing Eq.~\eqref{eq:weighted-sum} are called \emph{supported (efficient)}; these trees are provably Pareto-optimal~\cite[p.~71]{Ehrgott2005}.
All other Pareto-optimal trees are called \emph{unsupported (efficient)}~\citep{RH09}.


\section{Contribution and Structure of the Paper}
\label{sec:structure}
In this paper we continue and significantly extend the work initiated in \cite{BG2017ParetoBeneficial}. Our contributions are as follows: 
\begin{enumerate}
    \item We extend the analysis of optimal solutions to different graph classes in order to gain comprehensive insights into the properties of Pareto-optimal solutions and to evaluate the developed approaches on a wide range of problem instances. Moreover and in contrast to our earlier work~\citep{BG2017ParetoBeneficial}, we analyse supported efficient solutions of the Pareto-front for instances with $n > 10$ nodes to find stronger support for subsequent operator design. 
    \item We propose two generalizations of the original sub-graph mutation (SG) operator allowing to (a) optimize a scalarized sub-problem using a weighted-sum approach and (b) to replace a set of not necessarily connected edges during mutation. Both adaptations allow for a more flexible search process.
    \item We provide bounds on the time complexity of the introduced operators.
    This is helpful for the subsequent evaluation of these more sophisticated mutation approaches and their performance compared to other operators.
    \item We verify their effectiveness in an experimental study considering four graph classes with different characteristics of the Pareto-front (globally convex, concave, many/few Pareto-optimal solutions) for benchmarking.
\end{enumerate}
 
 Our results point out a major weakness of the original SG operator proposal and the benefits of the introduced modifications in comparison with (biased) 1-edge-exchange, Zhou \& Gen's genetic algorithm~(GA; \citet{ZG99}) as well as a simple weighted-sum approach. We show that the sub-graph based operators converge much faster towards the Pareto-front with respect to function evaluations and in terms of several multi-objective performance indicators (hypervolume indicator, unary $\varepsilon$-indicator and $\Delta p$). However, this comes at the cost of higher computational complexity for mutation itself. Finally, we sketch ideas on how to mitigate the computational overhead. 
 
 Note that this work specifically focuses the integration of (helpful) problem knowledge via mutation operators on edge-list representation. Thus, we are primarily interested in their individual contribution in generating (near) optimal solutions inside an encapsulating meta heuristic. Consequently, we test a plethora of mutation operators inside only a single meta-heuristic, which is one of the most established, simple (and well understood) meta-heuristics in the EMO domain: NSGA-II~\citep{DPAM02}.
 A comprehensive survey of the operators' behaviour in all modern or available meta-heuristics is beyond the scope and goal of this paper. Also note that whenever we speak of \emph{algorithms} in this paper, we refer to combinations of a specific encapsulating meta-heuristic (namely NSGA-II) and a specific variation operator applied inside.
 
 The remainder of this paper is structured as follows: after a detailed description of our instance generation approach for benchmarking in Section~\ref{sec:instances}, we first present a detailed analysis of Pareto-optimal solutions in Section~\ref{sec:analysis}. In a second step and based on the gained insights from Section~\ref{sec:analysis}, we deduce sub-graph based mutation operators in Section~\ref{sec:mutation}, discuss their runtime properties, and show that they are Pareto-beneficial. In Section~\ref{sec:experiments}, we describe our experimental setup and detail the results. Finally, in Section~\ref{sec:conclusion}, the findings are summarized and future directions for research are highlighted.


\section{Instance Generation}
\label{sec:instances}

We considerably widen the set of benchmark instances in contrast to our preliminary work~\citep{BG2017ParetoBeneficial}: We consider different groups of graphs with different shapes of the Pareto-front, which pose difficulties for multi-objective optimization algorithms. 
Note that -- in par with existing work~\citep{KC2001Generators, ZG99} -- we limit our experimental study to the bi-objective case here. We plan to expand our work to $q \ge 3$ in future work though. For instance generation we used the multi-step graph generator \texttt{grapherator}~\citep{B2018grapherator}.\footnote{GitHub repository: \url{https://github.com/jakobbossek/grapherator}} All benchmark graphs are complete, hence exhibiting a search space of maximum cardinality $n^{n-2}$ according to Cayley's theorem~\citep{Cayley1889}. We are aware of different (more sparse) interconnection structures being also of interest, but we find the shape of the Pareto-front to be the most interesting aspect for this study.
For later ease of reference we introduce four classes -- abbreviated C1 to C4 -- in the following. The reader is advised to consider Fig.~\ref{fig:exemplary_instances} as descriptive examples from each class while reading the following descriptions.
The figure depicts the edge costs for exemplary instances of each class alongside an approximation of the Pareto-front; the latter is obtained by weighted-sum scalarisation that is described in more detail in Section~\ref{sec:analysis}.
\begin{figure*}[thb]
  \centering
  \includegraphics[width=\textwidth]{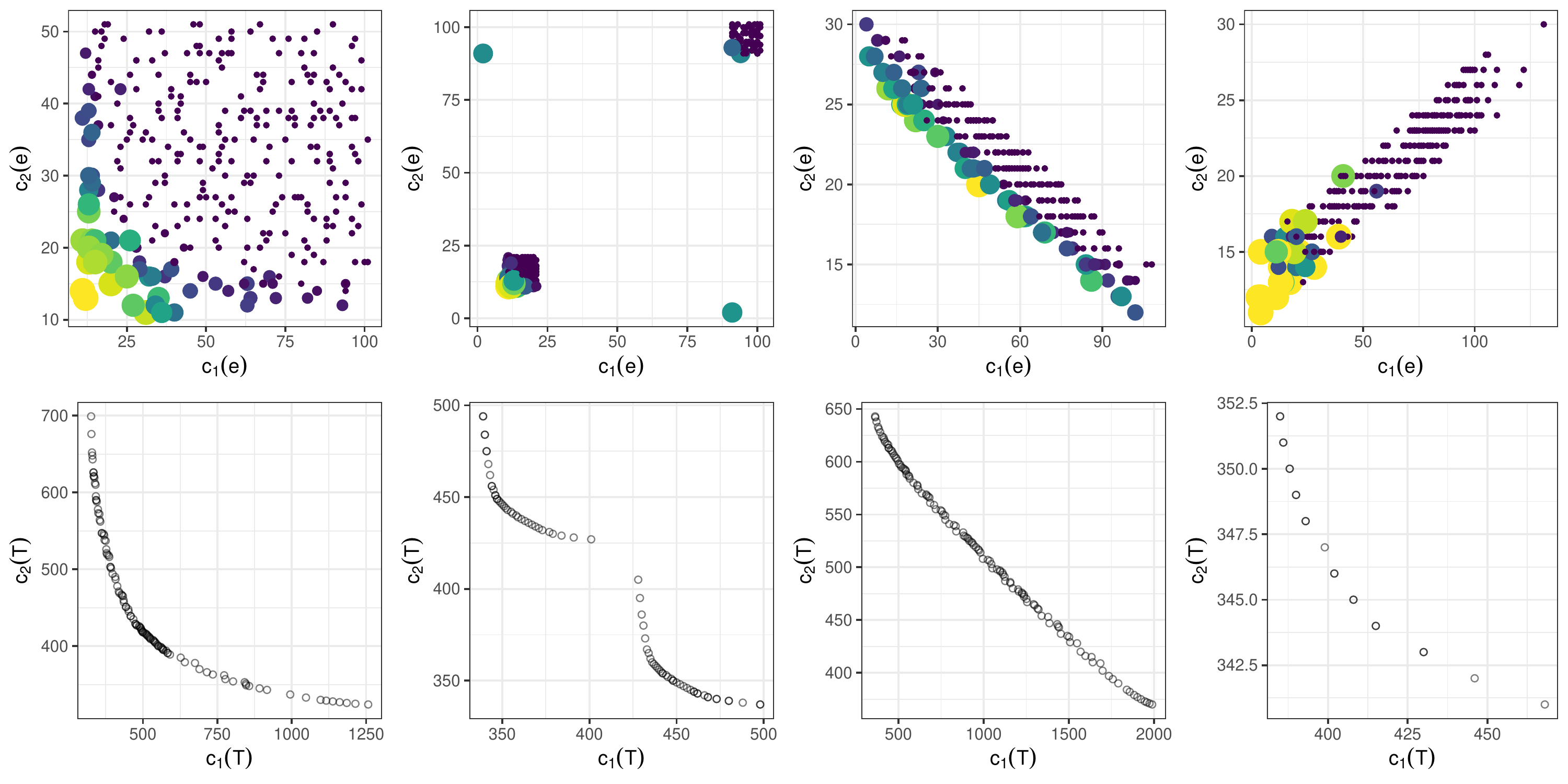}
  \caption{Visual examples for each one instance ($n = 25$) of each benchmark graph class C1 to C4 (from left to right). Depicted are scatterplots of the edge weights coloured and sized by their respective occurrence in supported efficient solutions (top row; the lighter/larger, the more frequent) and scatterplots of a Pareto-front approximation (bottom row).}
  \label{fig:exemplary_instances}
\end{figure*}
\begin{description}
\item[C1] \enquote{Classical} instances as used in~\citet{ZG99} and \citet{KC01}. Here, the first weight $c_1$ is sampled from a $\mathcal{U}(10, 100)$ distribution, while $c_2$ stems from a $\mathcal{U}(10, 50)$ distribution. The Pareto-front has a globally convex structure with many solutions nested in small concave regions. 
\item[C2] Here, the first weight is the rounded Euclidean distance between the points in the Euclidean plane. The second weight is sampled in a way that leads to a Pareto-front with a large concave region. The procedure was originally introduced by \citet{KC2001Generators} for the degree-constraint MST. Note that a concave shape poses major difficulties to approaches that are capable of locating \emph{supported efficient solutions} only, e.g., the classic weighted sum scalarization approach~\citep[Chapter~3, pp.~65]{Ehrgott2005}.
\item[C3] In this class the first weight is again given by the (rounded) Euclidean distance of the node coordinates. The second weight is chosen such that there is a strongly negative correlation ($\rho \approx -0.95$) between the first and the second weight. Instances with this weight structure show a very large number of efficient solutions and are in particular challenging for exact methods.
\item[C4] This class is the counterpart to C3 with a highly positive weight correlation ($\rho \approx 0.95$). Instances expose a rather limited number of efficient solutions.
\end{description}
For each graph class and number of nodes $n \in \{25, 50, 100, 250\}$ we generated $10$ instances resulting in a total benchmark set size of $160$ instances.


\section{Analysis of Supported Efficient Solutions}
\label{sec:analysis}

Although mutation operators can be considered as important drivers for innovation in evolutionary algorithms~\citep{Bey01}, application often ignores available problem knowledge and relies on pure and unbiased randomness.\footnote{This approach is certainly recommended for black-box problems, however, for known problem characteristics, this approach wastes valuable information.} Here, we analyze our problem class of multi-objective spanning trees and try to integrate domain knowledge from observed and supported efficient solutions into the construction of mutation operators. 

As mutation works on the level of problem representations, we analyse the genotypic characteristics of \emph{supported efficient solutions} for the moMST and follow the approach from our former work~\citep{BG2017ParetoBeneficial}. For each set of benchmark instances (see Section~\ref{sec:instances}), we compute the supported efficient solutions using a simple weighted-sum approach. Therefore, we generate $5\,000$ equidistantly spread weights $\lambda_k \in [0, 1], k = 1, \ldots, 5\,000$ and calculate the optimal solution for $G = (V, E, \lambda_k c_1 + (1 - \lambda_k)c_2)$ and each $k$. In principle, the complete Pareto-set/front can be computed by adopting the exact algorithm proposed by \citet{Corley1985}, a multi-objective generalization of Prim's well-known single-objective MST algorithm (see also \citet[p. 208ff]{Ehrgott2005}). However, the algorithm gets trapped by an exponential number of interim solutions and hits the wall clock time and/or memory limitations prior to completion on almost all instances (except for a handful of C4 instances) and does not find solutions which could not be discovered by the weighted sum algorithm. Note that the supported efficient solutions do not comprise solutions from the concave parts of the Pareto-front. This limits the generality of our following analysis to the classes C1, C3, and C4. For C2, only solutions of the convex parts of the Pareto front could be analyzed.

For an empirical analysis of the supported efficient solutions, we focus on the neighborhood of solutions as well as on the similarities of solutions in terms of prevailing and redundant edges. While the first aspect of neighborhood allows conclusions on the possible existence of an evolutionary path via neighboring solutions, the latter can provide information on the benefit of purely random mutation, which activates or deactivates any edge by chance.

In order to measure the neighborhood relation of solutions $T_1$ and $T_2$, we employ the \emph{\underline{n}ormalized \underline{n}umber of \underline{c}ommon \underline{e}dges} (NNCE) metric, which is defined as
\begin{align*}
  \text{NNCE}(T_1, T_2) := \frac{|E_{T_1} \cap E_{T_2}|}{|E_{T_1}|} \in [0, 1]
\end{align*}
where $E_{T_1}$ and $E_{T_2}$ are the respective edge sets.

\begin{figure}[tb]
  \centering
  \includegraphics[width=0.55\textwidth, clip]{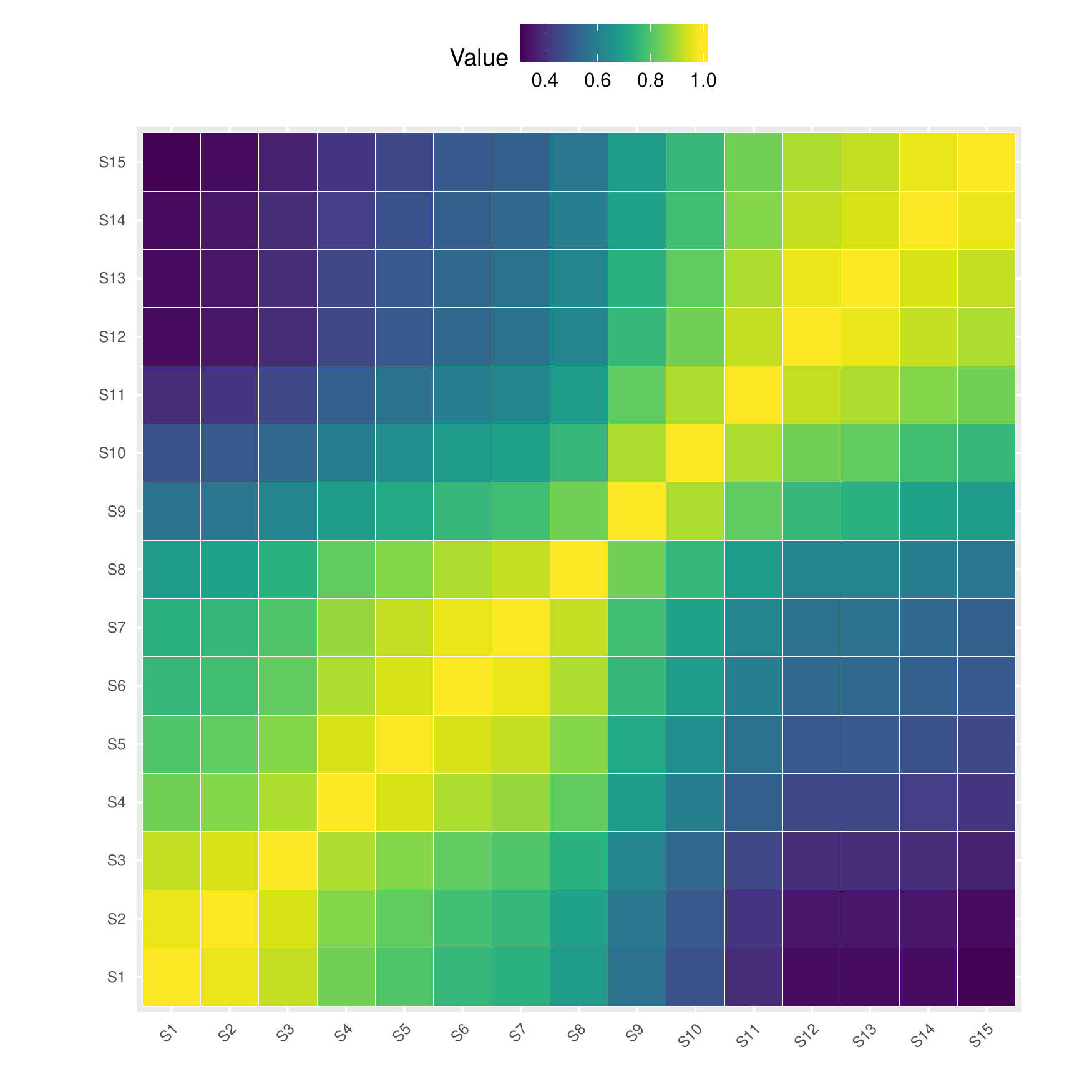}
  \caption{Exemplary heatmaps of pairwise similarity with respect to the NNCE between supported Pareto-optimal solutions for a graph with $n = 50$ nodes of class C2. Solutions are displayed in ascending order regarding objective one and thus in descending order regarding objective two.}
  \label{fig:analysis_heatmaps}
\end{figure}

For a complete comparison of solutions in a solution set, we can depict the pairwise comparison of solutions regarding NNCE as shown in Fig.~\ref{fig:analysis_heatmaps}. Each colored cell represents the NNCE value for the comparison of two solutions in the supported efficient solution set. Along the axes, solutions are shown in ascending order according to objective one (and hence in descending order of objective two), i.e., neighboring solutions in Fig.~\ref{fig:analysis_heatmaps} are also neighbors on the Pareto-front.

This analysis shows (also via all other instances and classes) that solutions have a strong neighboring relationship. The larger the distance of solutions in objective space is, the less similar the solutions are with respect to common edges. Interestingly, even rather contradicting solutions (those near the lexicographic extrema of the objectives) seem to have edges in common.

This finding is confirmed---again for all instances and classes of considered problems---by a second analysis based on a detailed investigation of solution phenotypes (i.e., spanning trees). For a given problem instance, see exemplarily Fig.~\ref{fig:analysis_embeddings} for instances of three classes, the occurrence of each edge in all supported efficient solutions is depicted. The frequency of occurrence is represented by the line style (the thicker the line, the more frequent the edge in the whole set of solutions) as well by the relative frequency annotated with each edge. Not displayed edges are not part of any solution. The constant use or omission of edges is strongly connected to the weighting of edges. As observed already in Fig.~\ref{fig:exemplary_instances} (top row), the dominance relationship of edges often determines their appearance in efficient solutions: the more a cost vector dominates others, the more often it can appear in efficient solutions.   

\begin{figure}[t]
  \centering
  \includegraphics[width=0.37\textwidth, trim = 0mm 15mm 0mm 15mm, clip]{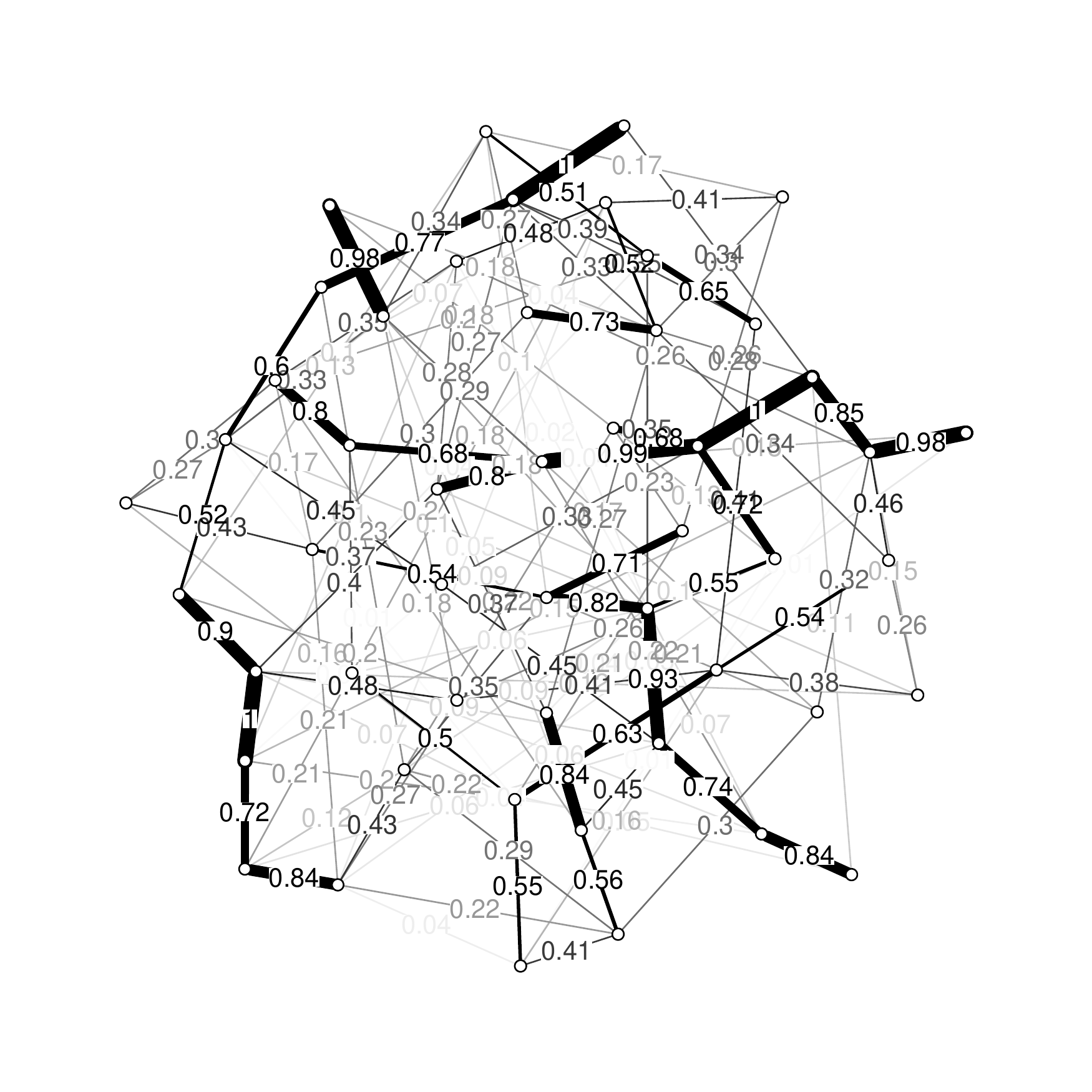}
  \hspace{-1cm}
  \includegraphics[trim = 0mm 15mm 0mm 15mm, clip, width=0.37\textwidth]{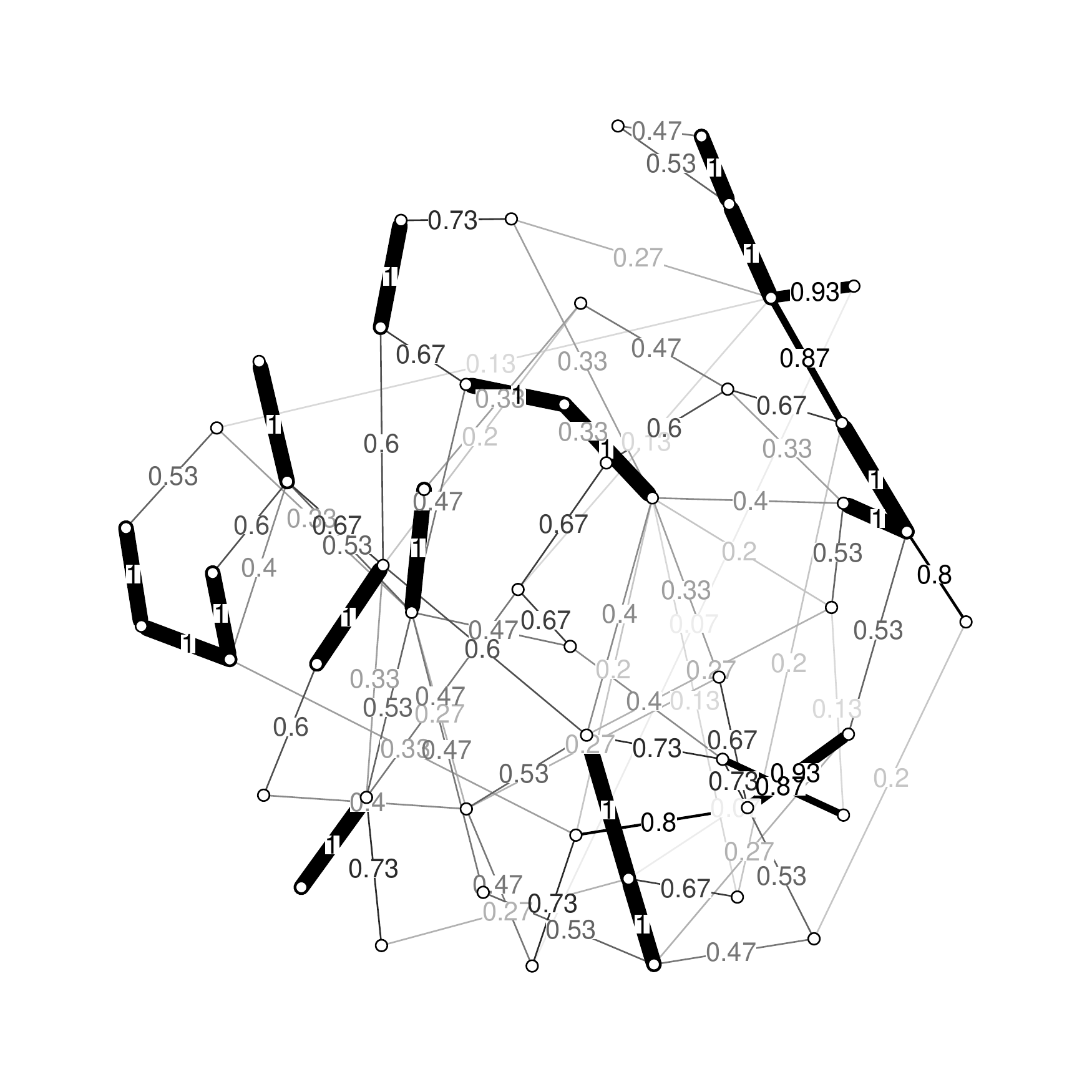}
  \hspace{-1cm}
  \includegraphics[trim = 0mm 15mm 0mm 15mm, clip, width=0.37\textwidth]{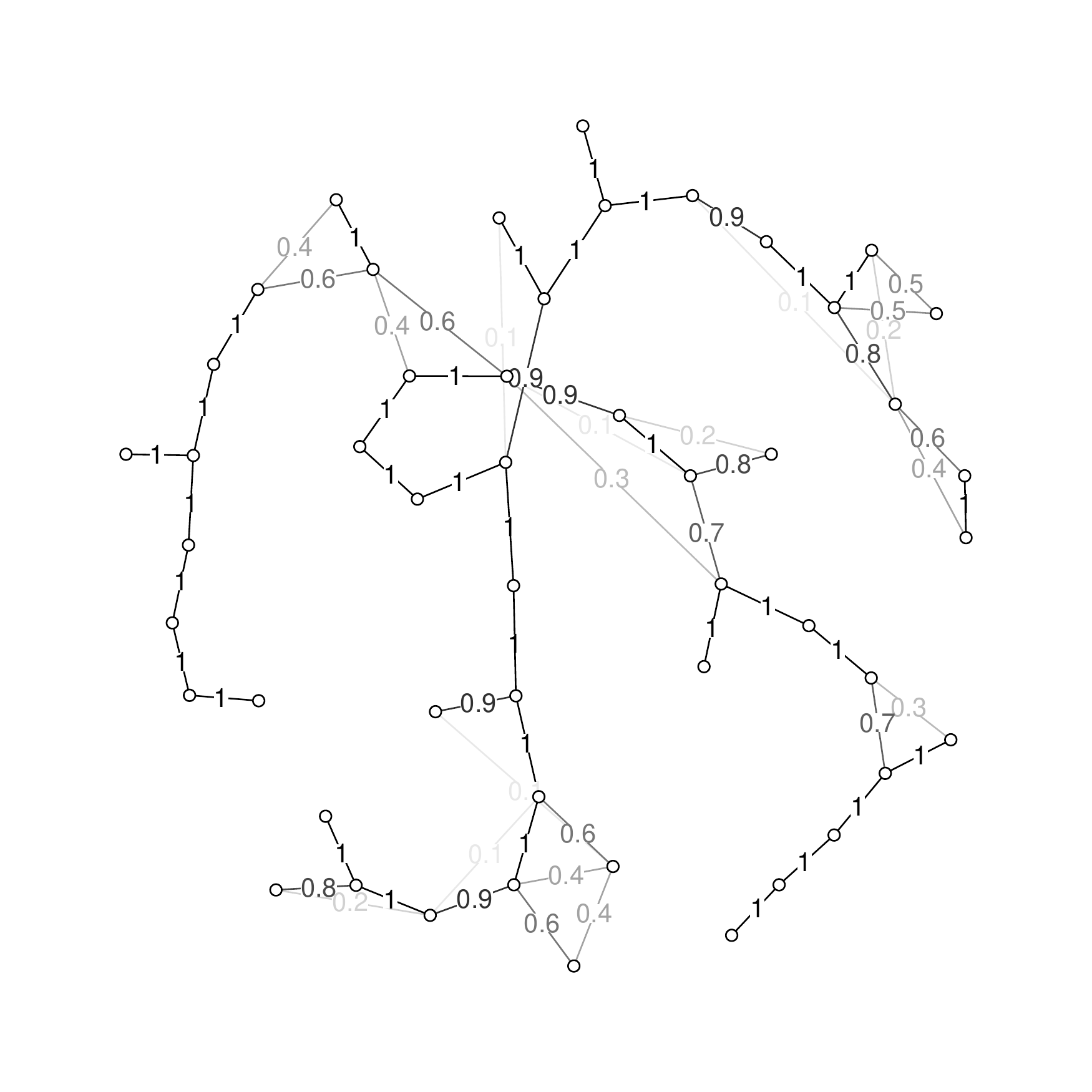}
  \caption{Representative embeddings of each one instance of class C1, C2 and C4 with $50$ nodes (from left to right). Edges are annotated with the fraction of supported Pareto-optimal solutions they are part of (also indicated by the edge thickness). Edges which do not occur in any solution are not displayed at all. We omit an example for C3, since due to the large number of Pareto-optimal solutions the plot is visually uninformative, because almost all edges are part of at least one supported efficient solution.}
  \label{fig:analysis_embeddings}
\end{figure}

This analysis shows that the use of simple edge exchange mutation may be disadvantageous. If an edge is part of or excluded from all (supported) efficient trees, the omission or inclusion is certainly disadvantageous, respectively. However, the simple random character of edge exchange mutation will lead to the removal or inclusion over the long run and deteriorate the solution. This destroys possibly good solutions and slows down the process of convergence to optimal solutions. Thus, it is a logical consequence of our analysis to include the gained knowledge into the construction of new mutation operators. 
Therefore, the following section introduces a mutation concept based on sub-graph selection, single objective optimization, and re-implantation into the overall graph structure.


\section{Sub-Graph Mutation Operators}
\label{sec:mutation}

In this section, inspired by the valuable insights gained from the detailed analysis of Pareto-optimal solutions in the previous section, we give detailed descriptions of several sub-graph mutation operators. Moreover, we provide time complexity bounds and discuss important properties. 
The proposed mutation operators are based on a \emph{direct encoding}, i.e., a spanning tree $T$ is represented by its edge set $E_T$ of cardinality $n-1$.
It should be noted, that in our implementations -- for the sake of efficiency -- all graphs, sub-graphs and (spanning) trees are encoded using adjacency lists. Each adjacency list is realised as a linked-list. This allows the insertion, deletion and look-up of edges in time $O(\deg(v)) = O(n)$ where $\deg(v)$ is the \emph{degree} of node $v$, i.e., the number of nodes adjacent to~$v$. Edge-insertion takes only $\Theta(1)$ in the special case when we are sure that the edge does not yet exist in the graph. Note that by replacing the linked-list with a balanced search-tree, e.g., a red-black tree, we could realise all mentioned operations in guaranteed time $O(\log(\deg(v))) = O(\log(n))$. However, this would have no major effect on the runtime since it does not improve the algorithms' bottleneck-operations.

The direct encoding representation is also used by~\citet{KC01} in their work on the moMST. They use a straightforward, plain and simple mutation denoted \emph{one-edge-exchange mutation} (1EX). Here, an edge $e \in E$ is selected uniformly at random. If $e \in E(T)$ there is nothing to do and the mutation has no effect.\footnote{An effect-less mutation -- in our setting with complete graphs -- happens with a low probability $\frac{n-1}{\binom{n}{2}} = \frac{2}{n}$.} Otherwise, if $e \notin E(T)$, the edge-insertion yields a graph with edge set $E(T) \cup \{e\}$ which contains exactly one cycle. Next, the operator drops a random edge $f \in E(T)$ from this cycle which yields another spanning tree with edge set $(E(T) \cup \{e\}) \setminus \{f\}$. It runs in time $O(n)$ as locating a cycle requires a single depth first search and the cycle length is bounded by $n$.\footnote{To achieve this upper bound during initialization of the evolutionary algorithm a helper array with references to the edges of the graph has to be initialized in order to sample edges in constant time. Note however, that this has to be done only once in time $\Theta(m)$ before the optimisation process starts.}
\citet{BNG2019} biased the 1EX operator by putting stronger selection probabilities on \enquote{cheap} edges, i.e., edges with lower ranks according to a partial ordering of all edges. This approach was justified by an extensive empirical study on the relationship between edges' ranks and their probability of being part of Pareto-optimal spanning trees. In their work the best performing strategy was to set probabilities proportional to their domination count, i.e. the number of edges they are dominated by. We use this version for comparison and denote it as 1BEX in the following. Note that 1BEX requires a preprocessing of the edges once and runs in time $O(n)$, too.

1BEX mutation is a fine example for introducing bias into an evolutionary operator. The operators discussed below are hybrids of random mutation and local-search introducing even higher bias towards low-ranked sub-trees. The general idea in a nutshell: given a spanning tree $T$, drop a set of randomly selected (un)connected edges $E_s$ and re-establish the spanning tree property by applying a well-known single-objective MST algorithm to join the connected components of $E(T)\setminus E_s$ with focus on a single objective.
The pseudo-codes in the upcoming sections are formulated with readability in mind. I.e., we omit potentially non-obvious details that are necessary for an efficient implementation. In Appendix~\ref{sec:implementation_details} we provide much more detailed pseudo-codes.

\begin{algorithm}[t]
\caption{Sub-Graph Mutation (SG)}
\label{alg:sg}
\begin{algorithmic}[1]
\Require{Graph $G = (V, E, c = (c_1, c_2)^{\top})$, spanning tree $T$ of $G$,\newline{} threshold $\sigma \in \{3, \ldots, |V|\}$, round $\in \{$true, false$\}$}
    \State $v$ $\gets$ Random node from $V$
    \State $s$ $\gets$ Random integer from $\{3, \ldots, \sigma\}$
    \State $V_s$ $\gets$ \Call{BFS}{$T, v, s$} \Comment{Stop once BFS tree contains $s$ nodes}
    \State $\lambda$ $\gets$ Random weight between $0$ and $1$
    \If{round}
        \State $\lambda$ $\gets$ \Call{round}{$\lambda$} \Comment{Special case as introduced in~\cite{BG2017ParetoBeneficial}}
    \EndIf
    \State $G'$ $\gets$ $G[V_s] = (V_s, E_s, \lambda c_1 + (1 - \lambda)c_2)$ \Comment{$E_s$ contains all edges between nodes in $V_s$}
    \State $E_s^{*}$ $\gets$ \Call{Kruskal}{$G'$}
    \State \Return{$(E(T) \setminus E_s) \cup E_s^{*}$}
\end{algorithmic}
\end{algorithm}

\subsection{Sub-graph mutation}

Algorithm~\ref{alg:sg} gives a detailed outline of the \emph{sub-graph mutation} (SG) considering the bi-objective case. Provided an input graph $G = (V, E, c)$ with cost function $c : E \to \mathbb{R}^2_{>0}$ and a spanning tree $T$, the algorithm proceeds as follows. First, a random start node is sampled in line~1 alongside a random integer threshold $s$ of size at most $\sigma$ in line~2.

This threshold $s$ is used in line~3 to limit a Breadth-First-Search (BFS) sub-routine on the spanning tree $T$, i.e., BFS terminates once $s$ nodes, gathered in the set $V_s$, have been visited. Next, a random real-valued weight $\lambda$ between zero and one is sampled, rounded to the nearest integer value if the corresponding flag is set (lines~4~to~6). The $\lambda$ value is used to calculate new weights in the induced sub-graph $G' = G[V_s] = (V_s, E_s)$, which contains all edges between pairs of nodes in $V_s$ that also occur in $G$. $G'$ is then used as input for Kruskal's algorithm in line~8. The output of Kruskal's algorithm is a spanning tree on $G'$ and its edges $E_s^{*}$ replace the edges in $T$ connecting the nodes in $V_s$. The working principle is illustrated in Fig.~\ref{fig:sg_detailed} by example. Note that in a proper implementation, the graphs and trees are managed by means of adjacency lists. 

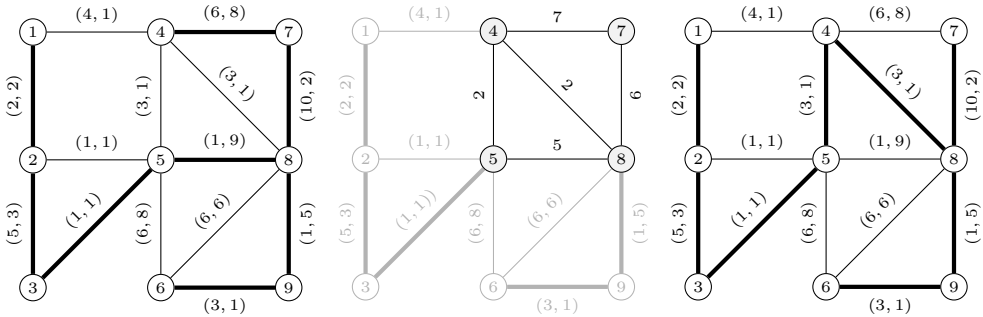
\begin{figure}[htbp]
  \begin{minipage}[c]{0.33\columnwidth}
    \begin{tikzpicture}[scale=1.7]
      \begin{scope}[every node/.style={draw, circle, minimum size=1em, inner sep=0.03cm, font=\tiny}] 
        \node (v0) at (0, 0) {$1$};
        \node (v1) at (0, -1) {$2$};
        \node (v2) at (0, -2) {$3$};
        \node (v3) at (1, 0) {$4$};
        \node (v4) at (1, -1) {$5$};
        \node (v5) at (1, -2) {$6$};
        \node (v6) at (2, 0) {$7$};
        \node (v7) at (2, -1) {$8$};
        \node (v8) at (2, -2) {$9$};
      \end{scope}

      \begin{scope}[every node/.style={font=\tiny}]
        \draw (v1) edge[-, ultra thick] node[above, sloped] {$(2,2)$} (v0);
        \draw (v0) edge[-] node[above, sloped] {$(4,1)$} (v3);

        \draw (v2) edge[-, ultra thick] node[above, sloped] {$(5,3)$} (v1);
        \draw (v1) edge[-] node[above, sloped] {$(1,1)$} (v4);

        \draw (v2) edge[-, ultra thick] node[above, sloped] {$(1,1)$} (v4);

        \draw (v3) edge[-] node[above, sloped] {$(3,1)$} (v4);
        \draw (v3) edge[-, ultra thick] node[above, sloped] {$(6,8)$} (v6);
        \draw (v3) edge[-] node[above, sloped] {$(3,1)$} (v7);

        \draw (v4) edge[-] node[above, sloped] {$(6,8)$} (v5);
        \draw (v4) edge[-, ultra thick] node[above] {$(1,9)$} (v7);

        \draw (v5) edge[-] node[above, sloped] {$(6,6)$} (v7);
        \draw (v5) edge[-, ultra thick] node[below, sloped] {$(3,1)$} (v8);

        \draw (v6) edge[-, ultra thick] node[below, sloped] {$(10,2)$} (v7);

        \draw (v7) edge[-, ultra thick] node[below, sloped] {$(1,5)$} (v8);
      \end{scope}
    \end{tikzpicture}
  \end{minipage}
  \begin{minipage}[c]{0.33\columnwidth}
    \begin{tikzpicture}[scale=1.7]
      \begin{scope}[every node/.style={draw, circle, minimum size=1em, inner sep=0.03cm, font=\tiny}] 
        \node[opacity=.3] (v0) at (0, 0) {$1$};
        \node[opacity=.3] (v1) at (0, -1) {$2$};
        \node[opacity=.3] (v2) at (0, -2) {$3$};
        \node[fill=gray!10] (v3) at (1, 0) {$4$};
        \node[fill=gray!10] (v4) at (1, -1) {$5$};
        \node[opacity=.3] (v5) at (1, -2) {$6$};
        \node[fill=gray!10] (v6) at (2, 0) {$7$};
        \node[fill=gray!10] (v7) at (2, -1) {$8$};
        \node[opacity=.3] (v8) at (2, -2) {$9$};
      \end{scope}

      \begin{scope}[every node/.style={font=\tiny}]
        \draw[opacity=.3] (v1) edge[-, ultra thick] node[above, sloped] {$(2,2)$} (v0);
        \draw[opacity=.3] (v0) edge[-] node[above, sloped] {$(4,1)$} (v3);

        \draw[opacity=.3] (v2) edge[-, ultra thick] node[above, sloped] {$(5,3)$} (v1);
        \draw[opacity=.3] (v1) edge[-] node[above, sloped] {$(1,1)$} (v4);

        \draw[opacity=.3] (v2) edge[-, ultra thick] node[above, sloped] {$(1,1))$} (v4);

        \draw (v3) edge[-] node[above, sloped] {$2$} (v4);
        \draw (v3) edge[-] node[above, sloped] {$7$} (v6);
        \draw (v3) edge[-] node[above, sloped] {$2$} (v7);

        \draw[opacity=.3] (v4) edge[-] node[above, sloped] {$(6,8)$} (v5);
        \draw (v4) edge[-] node[above] {$5$} (v7);

        \draw[opacity=.3] (v5) edge[-] node[above, sloped] {$(6,6)$} (v7);
        \draw[opacity=.3] (v5) edge[-, ultra thick] node[below, sloped] {$(3,1)$} (v8);

        \draw (v6) edge[-] node[below, sloped] {$6$} (v7);

        \draw[opacity=.3] (v7) edge[-, ultra thick] node[below, sloped] {$(1,5)$} (v8);
      \end{scope}
    \end{tikzpicture}
  \end{minipage}
  \begin{minipage}[c]{0.33\columnwidth}
    \begin{tikzpicture}[scale=1.7]
      \begin{scope}[every node/.style={draw, circle, minimum size=1em, inner sep=0.03cm, font=\tiny}] 
        \node (v0) at (0, 0) {$1$};
        \node (v1) at (0, -1) {$2$};
        \node (v2) at (0, -2) {$3$};
        \node (v3) at (1, 0) {$4$};
        \node (v4) at (1, -1) {$5$};
        \node (v5) at (1, -2) {$6$};
        \node (v6) at (2, 0) {$7$};
        \node (v7) at (2, -1) {$8$};
        \node (v8) at (2, -2) {$9$};
      \end{scope}

      \begin{scope}[every node/.style={font=\tiny}]
        \draw (v1) edge[-, ultra thick] node[above, sloped] {$(2,2)$} (v0);
        \draw (v0) edge[-] node[above, sloped] {$(4,1)$} (v3);

        \draw (v2) edge[-, ultra thick] node[above, sloped] {$(5,3)$} (v1);
        \draw (v1) edge[-] node[above, sloped] {$(1,1)$} (v4);

        \draw (v2) edge[-, ultra thick] node[above, sloped] {$(1,1)$} (v4);

        \draw (v3) edge[-, ultra thick] node[above, sloped] {$(3,1)$} (v4);
        \draw (v3) edge[-] node[above, sloped] {$(6,8)$} (v6);
        \draw (v3) edge[-, ultra thick] node[above, sloped] {$(3,1)$} (v7);

        \draw (v4) edge[-] node[above, sloped] {$(6,8)$} (v5);
        \draw (v4) edge[-] node[above] {$(1,9)$} (v7);

        \draw (v5) edge[-] node[above, sloped] {$(6,6)$} (v7);
        \draw (v5) edge[-, ultra thick] node[below, sloped] {$(3,1)$} (v8);

        \draw (v6) edge[-, ultra thick] node[below, sloped] {$(10,2)$} (v7);

        \draw (v7) edge[-, ultra thick] node[below, sloped] {$(1,5)$} (v8);
      \end{scope}
    \end{tikzpicture}
  \end{minipage}
  \caption{Working principle of the SGS mutation for $s = 4, \lambda = 0.5$. Left: source graph $G$ with bold spanning tree edges $E_T$. Center: BFS selected the node set $V_s = \{4, 5, 7, 8\}$ (light gray nodes) and build the induced sub-graph $G'=(V_s, E_s)$ with edge weights $0.5c_1(e) + 0.5c_2(e)$ for all edges of $e \in E_s$. All nodes and edges which do not belong to the induced sub-graph $G'$ are shown with reduced opacity. Right: finally, the operator computes an MST on $G'$ and creates a new spanning tree.}
  \label{fig:sg_detailed}
\end{figure}
Algorithm~\ref{alg:sg} is a modification/generalisation of the operator introduced in \citet{BG2017ParetoBeneficial}: (1) In our original proposal $s$ was a \enquote{soft} threshold, i.e., the sampled node set $V_s$ could have more than $s$ nodes, even up to $n$.\footnote{Extensive preliminary experiments revealed that this modification did not affect the performance in a significant way.} We introduced the modification to be able to control the runtime complexity, which depends on $\sigma$ (see below). (2) We allow $\lambda \in [0,1]$ while the original algorithm sampled $\lambda \in \{0, 1\}$. Note that this is a straight-forward generalisation and we expect better performance due to stronger flexibility. To discriminate the approaches by names, we keep SG as name for the vanilla \emph{sub-graph} mutation (parameter \verb|round=true|) and denote the \emph{scalarized} mutation with $\lambda \in [0,1]$ as SGS for \emph{sub-graph scalarized} (parameter \verb|round=false|).


\begin{theorem}
\label{thm:runtime_sg}
The SG mutation (see Algorithm~\ref{alg:sg}) has a worst case runtime complexity of
\begin{align*}
    \Theta(\sigma) + O(\sigma\Delta) + \Theta(|E_{\sigma}|\log \sigma) + \Theta(n)
\end{align*}
where $\Delta$ is the maximum node degree in $G$. 
\end{theorem}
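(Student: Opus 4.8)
The plan is to bound the cost of Algorithm~\ref{alg:sg} line by line and then identify the four costly lines with the four summands in the claim. Lines~1,~2,~4,~5 and~6 merely draw random numbers (a node, an integer $s\le\sigma$, a real $\lambda$ and an optional rounding); with a node array prepared once before the run, each of these is $\Theta(1)$ and is absorbed into the remaining terms. The genuinely costly steps are the BFS of line~3, the construction of the induced sub-graph $G'$ in line~7, the call to Kruskal's algorithm in line~8, and the assembly of the returned tree in line~9, contributing $\Theta(\sigma)$, $O(\sigma\Delta)$, $\Theta(|E_\sigma|\log\sigma)$ and $\Theta(n)$ respectively.

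The delicate step---and the main obstacle---is showing that the BFS costs only $\Theta(\sigma)$ rather than the naive $O(\sigma\Delta)$ one would fear from scanning adjacency lists. The key observation is that the BFS runs on the \emph{spanning tree} $T$, not on $G$, so the visited nodes together with the traversed edges form an acyclic connected sub-tree. If $k$ of the $s$ visited nodes have been fully expanded, the edges incident to them split into at most $k-1$ internal tree edges (each counted twice in the degree sum) and exactly one attaching edge per frontier node; as there are at most $s-k$ frontier nodes, the total number of edge inspections is bounded by $2(k-1)+(s-k)=k+s-2=O(s)$, independent of $\Delta$. Combined with eager termination (\enquote{stop once the BFS tree contains $s$ nodes}, so the neighbour loop of the last expanded node is aborted as soon as the $s$-th node appears) this gives $\Theta(s)$ work, which in the worst case $s=\sigma$ is $\Theta(\sigma)$.

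For line~7 I would argue that building $G'=G[V_s]$ amounts to scanning, for each of the $s\le\sigma$ nodes of $V_s$, its adjacency list in $G$ (of length at most $\Delta$) and keeping the neighbours that also lie in $V_s$; membership in $V_s$ is tested in $O(1)$ through a marker set during the BFS. This costs $O(\sum_{u\in V_s}\deg_G(u))=O(\sigma\Delta)$, which is the second term (an upper bound, since the true cost depends on the realised degrees). For line~8 I use the textbook analysis of Kruskal's algorithm: sorting the $|E_\sigma|$ edges of $G'$ dominates, at $\Theta(|E_\sigma|\log|E_\sigma|)$, while the union--find bookkeeping is $O(|E_\sigma|\,\alpha(\sigma))$ and hence negligible. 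Since $G'$ has at most $\sigma$ nodes, $|E_\sigma|\le\binom{\sigma}{2}=O(\sigma^2)$, so $\log|E_\sigma|=\Theta(\log\sigma)$ and the sorting cost simplifies to $\Theta(|E_\sigma|\log\sigma)$.

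Finally, for line~9 I would note that forming the mutant $(E(T)\setminus E_s)\cup E_s^{*}$ as a fresh spanning tree requires materialising an adjacency-list representation on $n$ nodes and $n-1$ edges: even though only the $s-1$ tree edges inside $V_s$ are deleted and the $s-1=O(\sigma)$ edges of $E_s^{*}$ inserted, the offspring must be produced without destroying the parent, so a $\Theta(n)$ copy of $T$ is both unavoidable and sufficient (the $O(\sigma)$ edge edits are dominated by it). Summing the four contributions yields the stated bound. I expect the tree-acyclicity argument for the BFS term to be the only non-routine part; everything else is a direct application of standard data-structure and MST runtime bounds, with the simplification $\log|E_\sigma|=\Theta(\log\sigma)$ the sole additional wrinkle.
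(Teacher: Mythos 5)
Your proposal is correct and follows essentially the same route as the paper's proof: the identical line-by-line decomposition into the restricted BFS ($\Theta(\sigma)$), induced sub-graph construction via adjacency-list scans with $O(1)$ membership checks ($O(\sigma\Delta)$), Kruskal with union--find ($\Theta(|E_{\sigma}|\log\sigma)$), and the $\Theta(n)$ assembly of the mutant. The only difference is cosmetic: where the paper justifies the BFS bound tersely by noting that it operates on a spanning tree, you spell out the explicit edge-inspection count ($2(k-1)+(s-k)=O(s)$), which is a valid elaboration of the same tree-acyclicity insight rather than a different argument.
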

\begin{proof}
We consider the case that $s = \sigma$ (see line~2 in Algorithm~\ref{alg:sg}) and therefore work with $\sigma$ in the following. Furthermore, we assume that sampling of integer random numbers can be done in constant time. Hence, the first two lines in Algorithm~\ref{alg:sg} are negligible. 
Next, we analyse the restricted breath-first search (Algorithm~\ref{alg:sg}, line 3). BFS maintains an array \verb+visited+ of length $n$ to mark already visited nodes. We initialise this array once in time $\Theta(n)$ with all zero entries and keep it in memory (see implementation details in Section~\ref{sec:implementation_details}). The BFS call maintains a counter of visited nodes and a linked-list of visited nodes $V_{\sigma}$. Once the counter hits $\sigma$ BFS terminates immediately. Hence, the running time of the restricted BFS is $\Theta(|V_{\sigma}|) = \Theta(\sigma)$ since we are operating on a spanning-tree. 
Lines~4 to~6 only contribute a constant amount of time.
The next costly operation is building the induced sub-graph $G' = G[V_{\sigma}]$ in line~7. This requires to traverse the adjacency lists of all $\sigma$ nodes in $V_\sigma$ in $G$. We can access the adjacency lists in $G$ directly for each node $v \in V_{\sigma}$. In sum this takes time
\begin{align*}
    \sum_{v \in V_{\sigma}} \underbrace{\deg_G(v_s)}_{\leq \Delta} \leq \sigma\Delta.
\end{align*}
For each traversed edge $e = \{i,j\}$ we can check \verb+visited+$[i]$ and \verb+visited+$[j]$ in constant time and add $e$ to $G' = G[V_{\sigma}]$ if both entries are~1; this step comprises the generation of the scalarized cost function $\lambda c_1 + (1-\lambda)c_2$. Thus the running time is bounded by $O(\sigma \Delta)$.
The running time of Kruskal's algorithm~(line~8) is $\Theta(|E_{\sigma}| \log \sigma)$ when adopting the Union-Find data-structure for set management~\citep{tarjan1975SetUnion}.
The final step~(line~9) takes $\Theta(n)$ steps: we (i) create an empty graph with $n$ nodes in time $\Theta(n)$, (ii) copy all edges from $E_{\sigma}^{*}$ in time $\Theta(\sigma)$ and copy all edges from $E_T$ which do not introduce cycles (this can be achieved in $\Theta(n)$ since for each edge $e = \{i,j\} \in E_T$ be can again use the \verb+visited+ array; if exactly one of the entries is~1 and the other is~0, we add the edge). Finally a clean-up step is required (not explicitly stated in the pseudo-code) to reset the $\sigma$ 1-entries in the utility array \verb+visited+ to~0. This can be done in time $\Theta(\sigma)$ by traversing $V_{\sigma}$.
Considering the main steps, the overall running time is the sum of its parts
\begin{align*}
    \Theta(\sigma) + O(\sigma\Delta) + \Theta(|E_{\sigma}|\log \sigma) + \Theta(n)
\end{align*}
as claimed.
\end{proof}
The dominating summand depends on three major aspects: (i) the choice of the parameter $\sigma \in \{3, \ldots, n\}$. E.g., if $\sigma=\Theta(n)$ and the graph is dense with $\Theta(n^2)$ edges, the running time is clearly dominated by Kruskal's algorithm. (ii) the result of the sampling $s \in \{3, \ldots, \sigma\}$ and (iii) the density of the input graph $G$ or more specifically, the density of the sub-graphs selected by the restricted BFS call. For a better understanding we give the following results for complete graphs, $s = \sigma$ and a specific choice of $\sigma$.

\begin{theorem}
\label{thm:runtime_sg_finegrained}
The worst-case running time of SG mutation (see Algorithm~\ref{alg:sg}) with $s = \sigma$ on a complete graph with $n$ nodes is
\begin{itemize}
    \item $\Theta(m \log n) = \Theta(n^2\log n)$ if $\sigma = \Theta(n)$,
    \item $\Theta(n^{3/2})$ if $\sigma = \Theta(\sqrt{n})$,
    \item $\Theta(n \log n)$ if $\sigma = \Theta(\log n)$.
\end{itemize}
\end{theorem}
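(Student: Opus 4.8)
The plan is to instantiate the general bound from Theorem~\ref{thm:runtime_sg}, namely $\Theta(\sigma) + O(\sigma\Delta) + \Theta(|E_{\sigma}|\log \sigma) + \Theta(n)$, with the structural parameters of a complete graph and then read off the dominating summand in each of the three regimes for $\sigma$. Two facts about complete graphs drive the whole argument. First, every node has degree exactly $n-1$, so $\Delta = n-1 = \Theta(n)$. Second, the sub-graph induced by any $\sigma$-element node set is again complete, hence $|E_{\sigma}| = \binom{\sigma}{2} = \Theta(\sigma^2)$ and $\log|E_\sigma| = \Theta(\log\sigma)$.

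The only delicate point is upgrading the $O(\sigma\Delta)$ term to a tight $\Theta$. In the proof of Theorem~\ref{thm:runtime_sg} the sub-graph construction cost was bounded by $\sum_{v\in V_\sigma}\deg_G(v)$. On a complete graph this sum equals $\sigma(n-1)$ exactly, so the construction cost is $\Theta(\sigma n)$ rather than merely $O(\sigma n)$. Substituting $\Delta = \Theta(n)$ and $|E_\sigma| = \Theta(\sigma^2)$ therefore turns the Theorem~\ref{thm:runtime_sg} bound into $\Theta(\sigma) + \Theta(\sigma n) + \Theta(\sigma^2\log\sigma) + \Theta(n)$, and it suffices to compare these four summands asymptotically.

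It then remains to carry out the routine case distinction. For $\sigma = \Theta(n)$ the Kruskal term $\Theta(\sigma^2\log\sigma) = \Theta(n^2\log n)$ dominates the construction term $\Theta(\sigma n) = \Theta(n^2)$ and the lower-order terms; since $m = \binom{n}{2} = \Theta(n^2)$ this equals $\Theta(m\log n)$. For $\sigma = \Theta(\sqrt{n})$ the construction term becomes $\Theta(\sigma n) = \Theta(n^{3/2})$ while the Kruskal term is only $\Theta(\sigma^2\log\sigma) = \Theta(n\log n)$; since $n^{3/2}$ grows faster than $n\log n$, the total is $\Theta(n^{3/2})$. Finally, for $\sigma = \Theta(\log n)$ the construction term is $\Theta(\sigma n) = \Theta(n\log n)$, whereas the Kruskal term shrinks to $\Theta((\log n)^2\log\log n)$ and the remaining terms are $\Theta(n)$ and $\Theta(\log n)$, so $\Theta(n\log n)$ dominates.

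The main obstacle, such as it is, is precisely this $O$-to-$\Theta$ upgrade of the sub-graph construction step: one has to verify that on complete graphs the cost is attained and not merely bounded, since otherwise the stated $\Theta$-bounds in the $\sigma = \Theta(\sqrt{n})$ and $\sigma = \Theta(\log n)$ regimes—exactly where the construction term is the dominating one—would degrade to upper bounds only. Everything else reduces to comparing the growth rates of $\sigma n$, $\sigma^2\log\sigma$, and $n$ for the three prescribed orders of $\sigma$.
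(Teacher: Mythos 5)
Your proof is correct and follows essentially the same route as the paper's, which simply instantiates Theorem~\ref{thm:runtime_sg} with $\Delta = n-1 = \Theta(n)$ and $|E_\sigma| = \Theta(\sigma^2)$ for complete graphs and compares the summands. You are in fact more careful than the paper's one-line argument: its bound keeps the sub-graph construction cost as $O(\sigma\Delta)$ only, whereas your explicit upgrade to $\Theta(\sigma n)$ (since every node of $V_\sigma$ has degree exactly $n-1$) is precisely what is needed for the stated $\Theta(n^{3/2})$ and $\Theta(n\log n)$ bounds to be tight, because in those two regimes the construction term is the dominating one.
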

\begin{proof}
If the input graph $G=(V,E)$ is complete, it holds that $|E| = \Theta(n^2)$, $\Delta = n-1 = \Theta(n)$ and $|E_{\sigma}| = \Theta(\sigma^2)$. The results follow from plugging in the respective bounds into Theorem~\ref{thm:runtime_sg}.
\end{proof}
Setting $\sigma = o(n)$, e.g. $\sigma = \log(n)$, in combination with a sparse graph or a $k$-regular graph reduces the worst-case considerably. Unsurprisingly, in any case, the running time is lower bounded by $\Omega(n)$ since a copy of the input tree needs to be created.

\begin{algorithm}[H]
\caption{Unconnected Sub-graph Mutation (USG)}
\label{alg:usg}
\begin{algorithmic}[1]
    \Require{Graph $G = (V, E, c = (c_1, c_2)^{\top})$, spanning tree $T$ of $G$,\newline{} threshold $\sigma \in \{1, \ldots, |V|-1\}$, round $\in \{$true, false$\}$}
    \State $s$ $\gets$ Random integer from $\{3, \ldots, \sigma\}$
    \State $E_s$ $\gets$ Random subset of $\sigma$ edges from $E(T)$
    \State $T'$ $\gets$ $(V, E(T) \setminus E_s)$
    \State $\lambda$ $\gets$ Random weight between $0$ and $1$
    \If{round}
        \State $\lambda$ $\gets$ \Call{round}{$\lambda$}
    \EndIf
    \State $G'$ $\gets$ $(V, E, \lambda c_1 + (1 - \lambda) c_2)$
    \State $E_T^{*}$ $\gets$ \Call{Kruskal}{$G', T'$} \Comment{Kruskal using $T'$ for initialization of already connected components.}
    \State \Return{$E_T^{*}$}
\end{algorithmic}
\end{algorithm}

\subsection{Unconnected sub-graph mutation}

The \emph{Unconnected Sub-Graph} mutation (USG) drops the approach of replacing connected sub-trees in favor of more flexibility. Algorithm~\ref{alg:usg} outlines the procedure. Here, the major difference is that an arbitrary subset $E_s \subset E_T$ of size $s \leq \sigma$ is dropped (lines 2-3). The sampling of weights is identical to the procedure described for the SG operator (lines 4-6). In contrast to SG though, now we cannot operate on a induced sub-graph of bounded size which is to be replaced. Instead, we need to reconnect the connected components of $T = (V, E_T \setminus E_s)$. This is accomplished by a modified implementation of Kruskal's algorithm where we fix the initial connected components to those explicitly given by $T$. Note, that in a vanilla implementation of Kruskal's algorithm there are $n$ components after initialisation. Fig.~\ref{fig:usg_detailed} illustrates the working principle by example.
\begin{figure}[tbp]
  \begin{minipage}[c]{0.328\columnwidth}
    \begin{tikzpicture}[scale=1.50]
      \begin{scope}[every node/.style={draw, circle, minimum size=1em, inner sep=0.03cm, font=\tiny}] 
        \node (v0) at (0, 0) {$1$};
        \node (v1) at (0, -1) {$2$};
        \node (v2) at (0, -2) {$3$};
        \node (v3) at (1, 0) {$4$};
        \node (v4) at (1, -1) {$5$};
        \node (v5) at (1, -2) {$6$};
        \node (v6) at (2, 0) {$7$};
        \node (v7) at (2, -1) {$8$};
        \node (v8) at (2, -2) {$9$};
      \end{scope}

      \begin{scope}[every node/.style={font=\tiny}]
        \draw (v1) edge[-, ultra thick] node[above, sloped] {$(2,2)$} (v0);
        \draw (v0) edge[-] node[above, sloped] {$(4,1)$} (v3);

        \draw (v2) edge[-, ultra thick] node[above, sloped] {$(5,3)$} (v1);
        \draw (v1) edge[-] node[above, sloped] {$(1,1)$} (v4);

        \draw (v2) edge[-, ultra thick] node[above, sloped] {$(1,1)$} (v4);

        \draw (v3) edge[-] node[above, sloped] {$(3,1)$} (v4);
        \draw (v3) edge[-, ultra thick] node[above, sloped] {$(6,8)$} (v6);
        \draw (v3) edge[-] node[above, sloped] {$(3,1)$} (v7);

        \draw (v4) edge[-] node[above, sloped] {$(6,8)$} (v5);
        \draw (v4) edge[-, ultra thick] node[above] {$(1,9)$} (v7);

        \draw (v5) edge[-] node[above, sloped] {$(6,6)$} (v7);
        \draw (v5) edge[-, ultra thick] node[below, sloped] {$(3,1)$} (v8);

        \draw (v6) edge[-, ultra thick] node[below, sloped] {$(10,2)$} (v7);

        \draw (v7) edge[-, ultra thick] node[below, sloped] {$(1,5)$} (v8);
      \end{scope}
    \end{tikzpicture}
  \end{minipage}
  \begin{minipage}[c]{0.328\columnwidth}
    \begin{tikzpicture}[scale=1.50]
      \begin{scope}[every node/.style={draw, circle, minimum size=1em, inner sep=0.03cm, font=\tiny}] 
        \node (v0) at (0, 0) {$1$};
        \node (v1) at (0, -1) {$2$};
        \node (v2) at (0, -2) {$3$};
        \node (v3) at (1, 0) {$4$};
        \node (v4) at (1, -1) {$5$};
        \node (v5) at (1, -2) {$6$};
        \node (v6) at (2, 0) {$7$};
        \node (v7) at (2, -1) {$8$};
        \node (v8) at (2, -2) {$9$};
      \end{scope}

      \begin{scope}[every node/.style={font=\tiny}]
        \draw (v1) edge[-] node[above, sloped] {$2$} (v0);
        \draw (v0) edge[-] node[above, sloped] {$4$} (v3);

        \draw (v2) edge[-, ultra thick] node[above, sloped] {$5$} (v1);
        \draw (v1) edge[-] node[above, sloped] {$1$} (v4);

        \draw (v2) edge[-, ultra thick] node[above, sloped] {$1$} (v4);

        \draw (v3) edge[-] node[above, sloped] {$3$} (v4);
        \draw (v3) edge[-, ultra thick] node[above, sloped] {$6$} (v6);
        \draw (v3) edge[-] node[above, sloped] {$3$} (v7);

        \draw (v4) edge[-] node[above, sloped] {$6$} (v5);
        \draw (v4) edge[-, ultra thick] node[above] {$1$} (v7);

        \draw (v5) edge[-] node[above, sloped] {$6$} (v7);
        \draw (v5) edge[-, ultra thick] node[below, sloped] {$3$} (v8);

        \draw (v6) edge[-] node[below, sloped] {$10$} (v7);

        \draw (v7) edge[-, ultra thick] node[below, sloped] {$1$} (v8);
      \end{scope}
    \end{tikzpicture}
  \end{minipage}
  \begin{minipage}[c]{0.328\columnwidth}
    \begin{tikzpicture}[scale=1.50]
      \begin{scope}[every node/.style={draw, circle, minimum size=1em, inner sep=0.03cm, font=\tiny}] 
        \node (v0) at (0, 0) {$1$};
        \node (v1) at (0, -1) {$2$};
        \node (v2) at (0, -2) {$3$};
        \node (v3) at (1, 0) {$4$};
        \node (v4) at (1, -1) {$5$};
        \node (v5) at (1, -2) {$6$};
        \node (v6) at (2, 0) {$7$};
        \node (v7) at (2, -1) {$8$};
        \node (v8) at (2, -2) {$9$};
      \end{scope}

      \begin{scope}[every node/.style={font=\tiny}]
        \draw (v1) edge[-, ultra thick] node[above, sloped] {$(2,2)$} (v0);
        \draw (v0) edge[-] node[above, sloped] {$(4,1)$} (v3);

        \draw (v2) edge[-, ultra thick] node[above, sloped] {$(5,3)$} (v1);
        \draw (v1) edge[-] node[above, sloped] {$(1,1)$} (v4);

        \draw (v2) edge[-, ultra thick] node[above, sloped] {$(1,1)$} (v4);

        \draw (v3) edge[-, ultra thick] node[above, sloped] {$(3,1)$} (v4);
        \draw (v3) edge[-, ultra thick] node[above, sloped] {$(6,8)$} (v6);
        \draw (v3) edge[-] node[above, sloped] {$(3,1)$} (v7);

        \draw (v4) edge[-] node[above, sloped] {$(6,8)$} (v5);
        \draw (v4) edge[-, ultra thick] node[above] {$(1,9)$} (v7);

        \draw (v5) edge[-] node[above, sloped] {$(6,6)$} (v7);
        \draw (v5) edge[-, ultra thick] node[below, sloped] {$(3,1)$} (v8);

        \draw (v6) edge[-] node[below, sloped] {$(10,2)$} (v7);

        \draw (v7) edge[-, ultra thick] node[below, sloped] {$(1,5)$} (v8);
      \end{scope}
    \end{tikzpicture}
  \end{minipage}
  \caption{Working principle of USG mutation for sampled $\sigma = 2$ and $\lambda = 1$. Left: source graph $G$ with thick spanning tree edges $E_T$. Center: USG drops edges $E_s = \{\{1, 2\}, \{7, 8\}\}$ (not thick anymore) and we see $G' = (V, E, c = c_1)$. Right: finally, the operator reconnects the connected components of $G'$ and creates a new spanning tree by re-inserting edge $\{1, 2\}$ and the new edge $\{4, 5\}$ (edge $\{4, 8\}$ would be possible too depending on the order the edges are checked).}
  \label{fig:usg_detailed}
\end{figure}

The next result shows that we gain flexibility at the expense of runtime complexity.
\begin{theorem}\label{thm:runtime_usg}
The USG mutation (see Algorithm~\ref{alg:usg}) has worst case runtime complexity of $\Theta(m \log n)$.
\end{theorem}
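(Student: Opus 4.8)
The plan is to mirror the line-by-line accounting of Theorem~\ref{thm:runtime_sg}, but to exploit the decisive structural difference of USG: it never restricts to a bounded induced sub-graph. Kruskal is invoked on the \emph{entire} scalarized graph $G'$ in line~8 (via the $T'$-seeded variant), so its cost is governed by $m$ rather than by $|E_\sigma|$. First I would dispose of the cheap steps. Sampling $s$ and $\lambda$ together with the optional rounding (lines~1 and~4--6) cost $\Theta(1)$. Drawing a random subset $E_s \subseteq E(T)$ of size $s \le \sigma \le n-1$ and forming $T' = (V, E(T)\setminus E_s)$ (lines~2--3) cost $O(n)$, since the tree has only $n-1$ edges. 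The only remaining non-trivial steps are building the scalarized graph $G'$ in line~7, the modified Kruskal call in line~8, and returning the result in line~9.

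For line~7 I would observe that producing the scalarized cost $\lambda c_1 + (1-\lambda)c_2$ for every edge touches all $m$ edges exactly once, contributing $\Theta(m)$. The heart of the argument is line~8. Here I would invoke Kruskal with the Union--Find structure of \citet{tarjan1975SetUnion}, exactly as in the SG analysis, with the single modification that the initial partition is seeded by the connected components of $T'$ rather than by $n$ singletons. Seeding costs only $O(n\,\alpha(n))$, as it performs at most $n-1$ union operations; processing the $m$ edges in sorted order costs $O(m\,\alpha(n))$; and sorting the edge list costs $\Theta(m\log m)$. The sorting term dominates, so line~8 runs in $\Theta(m\log m)$, and line~9 adds only $\Theta(n)$ for copying the resulting $n-1$ edges into the output. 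Summing the parts gives $\Theta(m\log m)$ overall.

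To convert this into the claimed bound I would use that $G$ is connected and simple, whence $n-1 \le m \le \binom{n}{2}$ and therefore $\log m = \Theta(\log n)$; substituting yields the upper bound $O(m\log n)$. For the matching lower bound I would argue that every invocation of USG sorts the full $m$-element edge list inside Kruskal, which in the comparison model takes $\Omega(m\log m) = \Omega(m\log n)$; hence the running time is in fact $\Theta(m\log n)$ on every input, and in particular in the worst case.

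The step I expect to require the most care is the treatment of the modified Kruskal in line~8. I must confirm that seeding the partition from $T'$ does not exceed the sorting cost (it does not, being only $O(n\,\alpha(n))$ and hence dominated by $\Theta(m\log n)$) and, for the $\Theta$ claim rather than a mere $O$, that all $\Theta(m)$ non-tree edges genuinely must be sorted, so that the lower bound $\Omega(m\log n)$ is intrinsic to the operator and not an artifact of a naive implementation.
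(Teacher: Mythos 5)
Your proposal is correct and follows essentially the same route as the paper's proof: the decisive observation in both is that line~8 runs Kruskal on the \emph{entire} scalarized graph, so the comparison-based sort of all $m$ edges costs $\Theta(m\log m) = \Theta(m\log n)$ and dominates the $O(m\,\alpha(n))$ union--find work and every other step, independently of $\sigma$. Your version merely adds explicit (and harmless) bookkeeping for the cheap lines, the $T'$-seeding of the union--find structure, and the $\log m = \Theta(\log n)$ conversion, all of which the paper leaves implicit.
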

\begin{proof}
A crucial observation is that we need to apply Kruskal's algorithm to a modified version---modified in terms of the weighted sum---of the input graph $G$ with $n$ nodes and $m=O(n^2)$ edges.
Kruskal's algorithm requires $\Theta(m \log m) = \Theta(m \log n)$ operations for sorting the edges with a comparison based sorting algorithm (e.g., heap-sort) plus -- by using a union-find data-structure for disjoint sets -- time $O(m\alpha(n))$ for building the spanning tree. 
Here, $\alpha(n)$ is the extremely slowly growing inverse Ackermann function which takes values less than or equal~5 for all reasonable graph sizes. Hence, regardless of the choice of parameter $\sigma$, the runtime is dominated by Kruskal, i.e. $\Theta(m \log n) + O(m \alpha(n)) = \Theta(m \log n)$, as stated.
\end{proof}

In fact, when a single edge is dropped the spanning tree property can be re-established in time $O(m \alpha(n))$ by an algorithm proposed by \citet{Tarjan1979}. However, in the case of dropping multiple -- up to $\sigma \leq n-1$ -- edges, to the best of our knowledge, there is no algorithm that joins the connected components faster than a traditional MST algorithm. 

In analogy to the sub-graph mutation, we introduce USG and USGS as notation to discriminate the unweighted and scalarized versions of the approach, respectively.

\begin{figure}
    \centering
    \includegraphics[width=\textwidth]{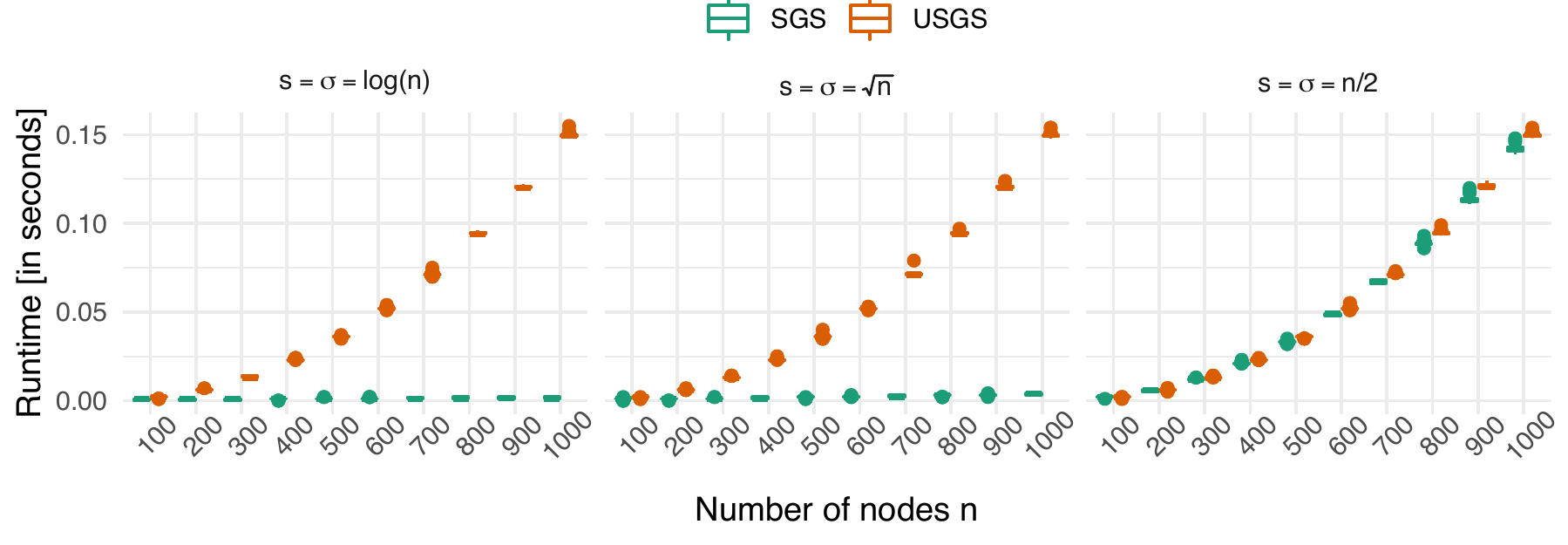}
    \caption{Distribution of runtimes (in seconds; $y$-axis) of SGS and USGS on complete graphs with $n$ nodes ($x$-axis). The plots are split by the choice of the $s$ parameter which is set to $\log(n)$ (left), $\sqrt{n}$ (center) and $n/2$ (right). The runtimes support the theoretical findings in Theorem~\ref{thm:runtime_sg_finegrained} and Theorem~\ref{thm:runtime_usg}.}
    \label{fig:runtimes_operators}
\end{figure}
The theoretical results are confirmed by experimental results in Fig.~\ref{fig:runtimes_operators}. Here, we plot the distribution of runtimes of SGS and USGS on complete graphs with $n \in \{100, 200, \ldots, 1\,000\}$ nodes varying the parameter $\sigma \in \{\log(n), \sqrt{n}, n/2\}$ (values rounded to the nearest integer) to match the regimes considered in Theorem~\ref{thm:runtime_sg_finegrained}.
The data was generated as follows: for each $n$ we created a complete graph of class~C1. Next, we sampled a random spanning tree of the graph and used it as input for both SGS and USGS setting $s = \sigma$ and logged the runtimes; this process was repeated 30 times independently.
We see that the theoretical observations are confirmed. While the runtimes of SGS increase with increasing $\sigma$, the runtimes of USGS are essentially unaffected. Moreover, we see very narrow box-plots indicating that lower and upper bounds indeed match.

\subsection{Potential algorithmic improvements}

We want to mention a couple of possible algorithmic improvements in the following. Note that in the SG mutation operator the single-objective MST algorithm can be easily replaced by an alternative. Prim's algorithm~\citep{Prim57} would reduce the worst-case running-time to $O(|E_\sigma| + \sigma \log \sigma)$ on dense graphs. 
The choice could even be made in an instance-specific way as Kruskal should be preferred over Prim on sparse graphs, say with $m = \Theta(n)$, or when edge weights are integers from a small domain such that the sorting of edges---the most expensive operation in Kruskal---can be realized in linear time (e.g., by adopting counting sort~\citep{cormen2009IntroductionToAlgorithms}). In contrast, Prim clearly outperforms Kruskal on dense graphs with $m = \Theta(n^2)$.
A legitimate question is why did we prefer Kruskal over Prim? The answer is two-fold: 1) Kruskal is easier to implement and 2) there is no alternative to Kruskal---or at least we are not aware of an alternative---for the unconnected USG mutation operator. This is due to the working principle of Prim's algorithm, which grows a spanning tree from a start node sequentially avoiding cycles. However, in USG, the algorithmic problem at hand is to reconnect existing connected components. Applying Kruskal seems to be the natural choice as it grows a forest until all components are connected and hence only the initialization of the union-find data structure requires algorithmic adaptation.

\subsection{Pareto-beneficial behaviour of the sub-graph mutation operators}

Let $T$ be a spanning tree and $T'$ be another spanning tree resulting from applying either sub-graph based mutation to $T$. We claim that $T$ cannot dominate $T'$. In other words: $T'$ either dominates $T$ or both trees are incomparable. This desirable property---termed \emph{Pareto-beneficial} property due to the fact that it drives convergence and diversity at the
same time without allowing deterioration---is illustrated in Fig.~\ref{fig:pareto_beneficial}.
Given a spanning tree $T$ SG drops a connected set $E_s \subset E(T)$ in favor of another subset $E_s^{*}$. Note that $E_s^{*}$ stems from an application of an optimal single-objective MST algorithm on a weighted sum scalarisation $c_{\lambda} = \lambda c_1 + (1 - \lambda)c_2$ of the the sub-graph induced by the nodes spanned by $E_s$. Thus, $E_s^{*}$ is (supported) efficient and hence $c(E_s^{*}) \preceq c(E_s)$ and as a direct consequence $c(T') = c(E(T) \setminus E_s \cup E_s^{*}) \preceq c(T)$. The proof for USG is omitted since it is very similar. We conclude the above statement in the following theorem.
\begin{theorem}
All proposed sub-graph based mutation operators are Pareto-beneficial.
\end{theorem}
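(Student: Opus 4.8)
The plan is to establish the property in its exact form, namely that the parent $T$ cannot dominate the mutant $T'$; I stress that this is \emph{weaker} than requiring the mutant to dominate the parent, since $T'$ is allowed to be incomparable to $T$, so I only need to exclude $c(T) \preceq c(T')$. I would handle SG, SGS, USG and USGS in one argument, because all four share a single skeleton: a set $E_s \subseteq E(T)$ is removed, the forest $F := E(T) \setminus E_s$ is kept, and a replacement set $R$ is inserted with $E(T') = F \cup R$. For SG/SGS, $R = E_s^{*}$ is the Kruskal tree of $G[V_s]$ and $E_s$ is itself a spanning tree of $G[V_s]$; for USG/USGS, $R$ is the set of edges the component-initialised Kruskal adds to reconnect $(V, F)$, while $E_s$ was the original reconnection. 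In either case $E_s$ and $R$ are two completions of the common forest $F$ to a spanning tree.

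First I would use a cancellation observation. Since edge costs add component-wise and $F$ is shared, $c(T) = c(F) + c(E_s)$ and $c(T') = c(F) + c(R)$; hence $c(T) \preceq c(T')$ holds if and only if $c(E_s) \preceq c(R)$. The fixed part $c(F)$ is irrelevant, so it suffices to rule out that the removed local structure $E_s$ dominates the inserted one $R$.

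Next I would invoke the optimality of Kruskal's output for the sampled scalarisation $c_\lambda = \lambda c_1 + (1 - \lambda) c_2$. By construction $R$ is a $c_\lambda$-minimum completion of $F$: the minimum-$c_\lambda$ spanning tree of $G[V_s]$ for SG, and the minimum-$c_\lambda$ spanning tree of $G$ containing $F$ for USG (which is exactly what the component-initialised Kruskal returns). The decisive point is that $E_s$ is itself feasible in this very optimisation problem, since it completes $F$ into the parent $T$; therefore $c_\lambda(R) \le c_\lambda(E_s)$, equivalently $c_\lambda(T') \le c_\lambda(T)$. Suppose now, for contradiction, that $c(T) \preceq c(T')$. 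By the cancellation step $c(E_s) \preceq c(R)$, i.e. $c_i(E_s) \le c_i(R)$ for $i \in \{1,2\}$ with at least one strict inequality. As $\lambda, 1 - \lambda > 0$, summing the weighted inequalities preserves the strictness and gives $c_\lambda(E_s) < c_\lambda(R)$, contradicting $c_\lambda(R) \le c_\lambda(E_s)$. Hence $T \not\preceq T'$, and the operator is Pareto-beneficial.

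The step I expect to be the main obstacle is the boundary regime of the rounded variants SG and USG, where $\lambda$ is rounded to $\{0,1\}$ so that one weight vanishes and $c_\lambda$ reduces to a single objective. Then the scalar minimiser is only weakly efficient and the strictness-propagation step fails: $E_s$ could tie with $R$ in the optimised objective yet be strictly cheaper in the other, in which case $E_s$ actually dominates $R$ and the property breaks. I would repair this by fixing a lexicographic tie-breaking rule inside Kruskal --- sort edges by the optimised objective first and by the remaining objective second, equivalently minimise $c_1 M + c_2$ (or $c_2 M + c_1$) for a sufficiently large $M$. This makes $R$ the lexicographically minimal, and thus genuinely Pareto-efficient, completion of $F$, so that no competitor --- in particular $E_s$ --- can dominate it, which recovers $T \not\preceq T'$. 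The continuous variants SGS and USGS are unaffected, as there $\lambda \in (0,1)$ with probability one.
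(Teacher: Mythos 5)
Your proof is correct, and for the scalarized variants SGS/USGS it rests on the same key ingredient as the paper's argument: the inserted edge set $R$ is a $c_\lambda$-optimal completion of the retained forest $F$, while the removed edge set is a feasible completion of the very same problem. The difference lies in how that optimality is turned into the Pareto-beneficial property. The paper argues directly that $E_s^{*}$ is ``(supported) efficient and hence $c(E_s^{*}) \preceq c(E_s)$'', concluding $c(T') \preceq c(T)$ --- a claim that is both stronger than needed and not actually implied: scalar optimality of $E_s^{*}$ only rules out that $E_s$ dominates it; the two cost vectors may well be incomparable (consistent with the theorem's own framing that parent and mutant may be incomparable). Your contrapositive formulation --- assume $c(T) \preceq c(T')$, cancel the common forest $F$, and propagate the component-wise inequalities (one of them strict) through the strictly positive weights $\lambda, 1-\lambda$ to contradict $c_\lambda(R) \le c_\lambda(E_s)$ --- is the correct way to extract exactly the claimed conclusion, and your cancellation step handles SG and USG uniformly, whereas the paper simply omits the USG case as ``very similar''.

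Your concern about the rounded variants is also well-founded and identifies a genuine gap in the \emph{paper's} proof, not in yours. For SG/USG with $\lambda \in \{0,1\}$ the scalarization is not a positive combination of both objectives, so the appeal to supported efficiency (which, by the paper's own definition in Section~2, requires all weights strictly positive) does not apply: Kruskal's output with respect to, say, $c_1$ alone may tie with the removed sub-tree in $c_1$ yet be strictly worse in $c_2$, in which case the parent strictly dominates the mutant. This is not a vacuous worry on the paper's own benchmark, since the integer-valued weight classes (e.g., C2, C3) admit cost ties. Your lexicographic tie-breaking repair is sound --- a lexicographically minimal completion is Pareto-optimal among completions, so no feasible competitor, in particular $E_s$, can dominate it --- and something of this kind (or a no-ties assumption) is needed for the theorem to hold as stated for the round$=$true operators; the paper's proof silently glosses over this case.
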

Note that this property follows a local, greedy perspective, i.e., Pareto-beneficial behaviour holds only with respect to the input solution $T$. However, the mutant $T'$ may be dominated by other individuals in the population if a population-based EMOA is used. Moreover, sequential application of a Pareto-beneficial operator to $T$, i.e., mutating the mutant, may produce a tree which is dominated by $T$. Nevertheless, the property is desirable, since deterioration of solutions is avoided and mutants tend to head in the direction of the Pareto-front (see next section for examples).

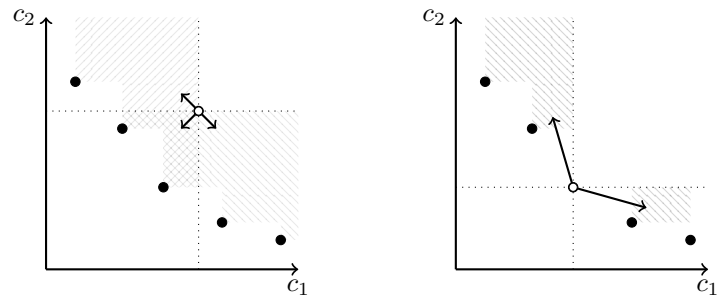
\begin{figure}[tbp]
  \centering
  \subfloat{
  \begin{tikzpicture}[scale=.78]
    \draw[->, thick] (0,0) -- (4.3,0) node[below] {$c_1$};
    \draw[->, thick] (0,0) -- (0, 4.3) node[left] {$c_2$};

    \path (0.5,3.2) coordinate (pf1);
    \path (1.3,2.4) coordinate (pf2);
    \path (2, 1.4) coordinate (pf3);
    \path (3, 0.8) coordinate (pf4);
    \path (4, 0.5) coordinate (pf5);

    \draw[-, dotted] (0, 2.7) -- (4.3, 2.7);
    \draw[-, dotted] (2.6,0) -- (2.6, 4.3);

    \fill[pattern=north east lines, opacity=0.3] (0.5, 4.3) -- (pf1) -- (1.3, 3.2) -- (pf2) -- (2, 2.4) -- (pf3) -- (2.6, 1.4) -- (2.6, 4.3) -- cycle;

    \fill[pattern=north west lines, opacity=0.3] (1.3, 2.7) -- (pf2) -- (2, 2.4) --  (pf3) -- (3, 1.4) -- (pf4) -- (4, 0.8) -- (pf5) -- (4.3, 0.5) -- (4.3, 2.7) -- cycle;

    \path (2.6, 2.7) coordinate (emoa);

    \draw[->, thick] (emoa) -- (2.3, 2.4);
    \draw[->, thick] (emoa) -- (2.3, 3.0);
    \draw[->, thick] (emoa) -- (2.9, 2.4);

    \draw[fill=black] (emoa) circle (0.8mm);
    \draw[fill=white] (emoa) circle (0.7mm);

    \foreach \ep in {pf1, pf2, pf3, pf4, pf5}
      \draw[fill=black] (\ep) circle (0.8mm);

  \end{tikzpicture}
  }
  \hspace{1cm}
  \subfloat{
  \begin{tikzpicture}[scale=.78]
    \draw[->, thick] (0,0) -- (4.3,0) node[below] {$c_1$};
    \draw[->, thick] (0,0) -- (0, 4.3) node[left] {$c_2$};

    \path (0.5,3.2) coordinate (pf1);
    \path (1.3,2.4) coordinate (pf2);
    \path (2, 1.4) coordinate (pf3);
    \path (3, 0.8) coordinate (pf4);
    \path (4, 0.5) coordinate (pf5);

    \draw[-, dotted] (0, 1.4) -- (4.3, 1.4);
    \draw[-, dotted] (2,0) -- (2, 4.3);

    \fill[pattern=north west lines, opacity=0.4] (pf1) -- (1.3, 3.2) -- (pf2) -- (2, 2.4) -- (pf3) -- (3, 1.4) -- (pf4) -- (4,0.8) -- (pf5) -- (4, 1.4) -- (pf3) -- (2, 4.3) -- (0.5, 4.3) -- cycle;

    \draw[->, thick] (pf3) -- ($(pf3) - (0.35,-1.2)$);
    \draw[->, thick] (pf3) -- ($(pf3) - (-1.25,0.35)$);

    \foreach \ep in {pf1, pf2, pf4, pf5}
      \draw[fill=black] (\ep) circle (0.8mm);

    \draw[fill=black] (pf3) circle (0.8mm);
    \draw[fill=white] (pf3) circle (0.7mm);

  \end{tikzpicture}
  }
  \caption{Illustration of the objective space regions reachable by application of a sub-graph mutation operator on a non Pareto-optimal spanning tree (left) and a Pareto-optimal spanning tree (right).}
  \label{fig:pareto_beneficial}
\end{figure}

\subsection{Random walks}

We close this section by considering the empirical behaviour of the proposed mutation operators by means of visualization. To do so we investigate \emph{random walks} of the mutation operators, i.e., we initialize a random spanning tree $T$ using Broder's algorithm~\citep{broder1989RandomSpanningTrees}, iteratively create a mutant $T'$ by applying one of the proposed mutation operators and replace $T$ with $T'$ \underline{without} dominance checks. Fig.~\ref{fig:random_walk_gallery} shows each three random walks of length 100 of SG and USG as well as the scalarized versions SGS and USGS with $\sigma = n/4 = 25$. The plots show the random walks in the objective space for each considered instance of classes C1 to C4 with $n = 100$ nodes by drawing arrows from $T$ to $T'$. We can clearly identify patterns. First of all, we can see the obvious Pareto-beneficial behaviour of all operators as $T'$ never dominates $T$ in the beginning of the random walks (except for class C3 where by design of the graph class itself the initial points are close to the extremely large Pareto-front). Secondly, while SG and USG seem to get stuck half-way with an oscillating behaviour for C1 and C2, the scalarized versions manage to advance faster and in a more directed manner towards the Pareto-front. These observations are consistent across all instances of the corresponding classes. Furthermore, a close look reveals that dropping unconnected edges seems indeed advantageous as USGS performs slightly better than its connected counterpart SGS; this effect is even stronger for C1.
In contrast, for classes C3 and C4, there seems to be no difference between SG(S) and USG(S) in terms of random walks. For C3 we observe an oscillating behaviour for all sub-graph operators which is in par with the large number of Pareto-optimal solutions and the structure of the edge 
\begin{figure}[h!]
    \centering
    \includegraphics[width=\textwidth]{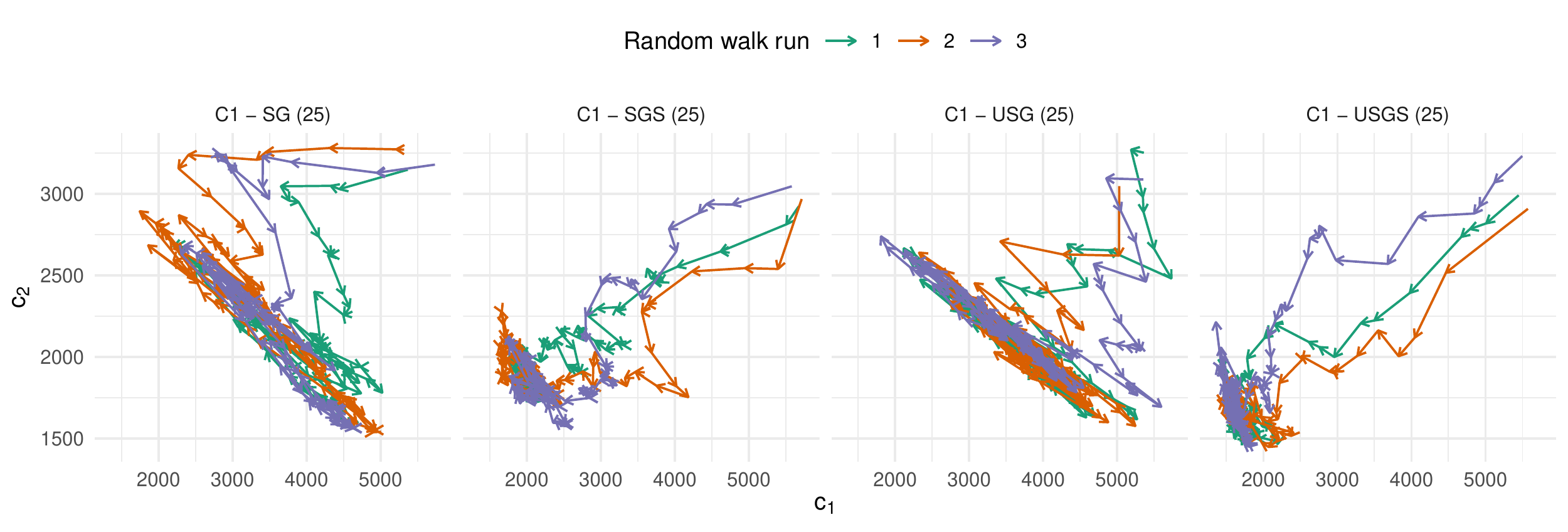}
    \includegraphics[width=\textwidth, trim = 0 7pt 0 32pt, clip]{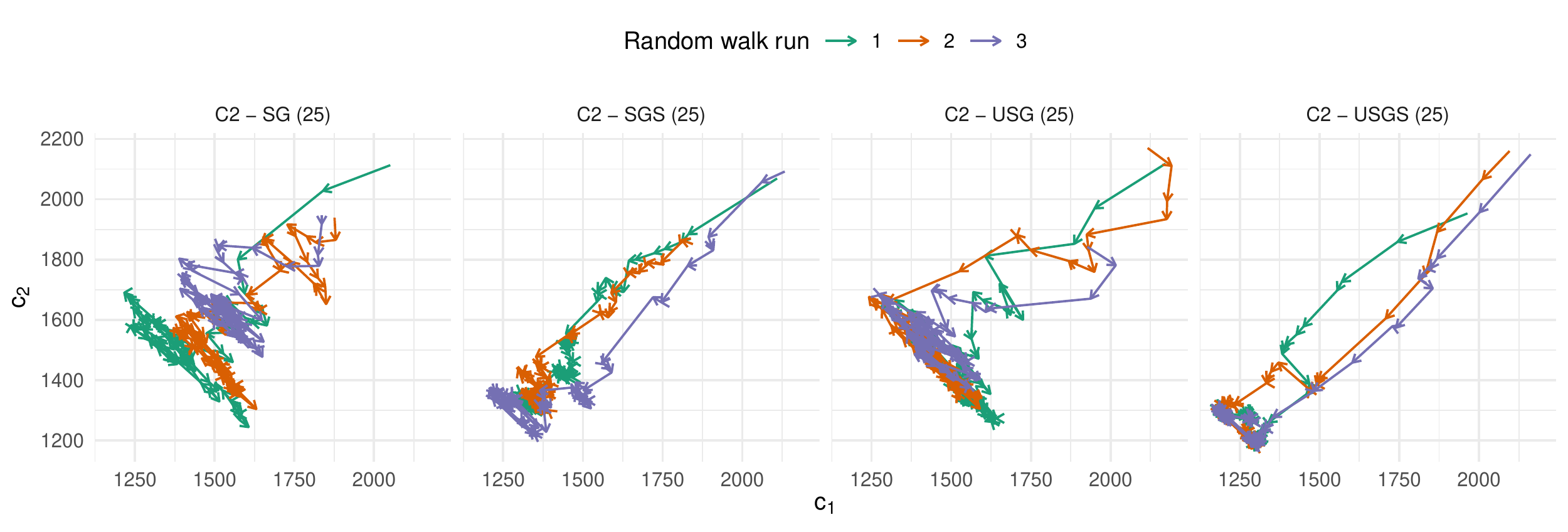}
    \includegraphics[width=\textwidth, trim = 0 7pt 0 32pt, clip]{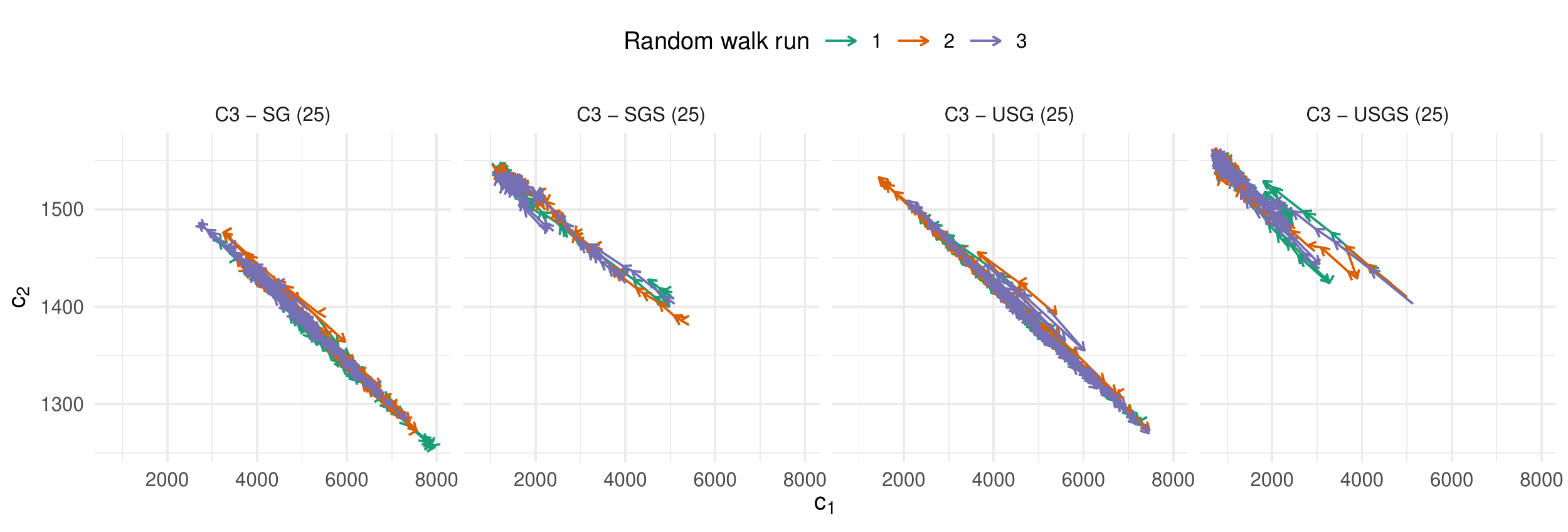}
    \includegraphics[width=\textwidth, trim = 0 7pt 0 32pt, clip]{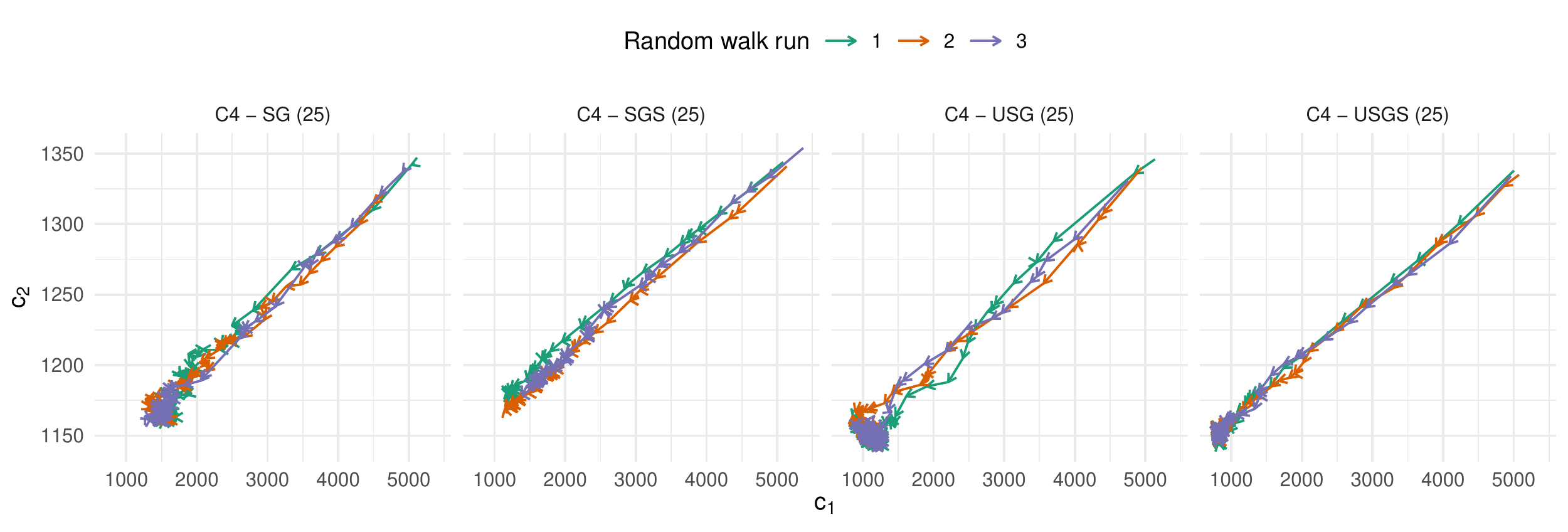}
    \caption{Visualization of each three random walks of the SG, SGS, USG and USGS mutation with $\sigma = 25$ on an instance $(n=100)$ of class C1 to C4 (top to bottom).}
    \label{fig:random_walk_gallery}
\end{figure}
weights (see 3rd column in Fig.~\ref{fig:exemplary_instances}). In the case of C4 we observe a rapid and strongly directed advance towards the front. Again, this can be explained by structural properties of the edge weights. Here, only a small fraction of edges are constant members of Pareto-optimal solutions (see 4th column in Fig.~\ref{fig:exemplary_instances}) and hence once added those edges are not removed anymore by subsequent mutations.
It seems likely that that these behavioral insights carry over to population-based EMOAs in which survival selection guides the evolutionary search process. We expect the scalarized mutation operators to perform better in the subsequent benchmark study.


\section{Experiments}
\label{sec:experiments}

In this chapter we conduct an extensive experimental study to compare the proposed sub-graph based mutation operators with operators from the literature. First, we detail the experimental setup and evaluate the results afterwards.

\subsection{Experimental setup}

For benchmarking we use instances of complete graphs with 25, 50, 100 and 250 nodes respectively.\footnote{We are aware that more sparse graphs, e.g. with $m=\Theta(n)$ are of interest as well. However, we decided for complete graphs due to (1) the maximum search space size of $n^{n-2}$ and (2) preliminary experiments that showed that the here presented results transfer to the sparse setting.} For each benchmark class introduced in Section~\ref{sec:instances} and each number of nodes we generated 10 instances by means of the R package \texttt{grapherator}~\citep{B2018grapherator}. In total, our benchmark set consists of $4~\text{ classes } \times 4~\text{ sizes } \times 10 = 160$ instances.

As mentioned earlier, we consider the widely used EMOA NSGA-II~\citep{DPAM02} as encapsulating meta-heuristic for eleven evaluated mutation operators. In a nutshell, NSGA-II uses a ranking of candidate solutions based on fast non-dominated sorting for convergence and crowding distance for diversity preservation.
Note that we do not use crossover operators, because the focus of this study is the investigation of tailored mutation. Their effects may be blurred or even vanish by allowing interactions with other variation operators.
We use Zhou and Gen's uniform mutation~\citep{ZG99} on the Pr\"ufer-code representation (UNIFORM), unbiased one-edge-exchange (1EX) and its biased version (1BEX) on a direct encoding. Furthermore, we use the (scalarized) sub-graph mutation (SGS and SG) as well as (scalarized) unconnected sub-graph mutation (USGS and USG), respectively. We set $\sigma = n/2$ for all sub-graph based operators to allow for both small and large mutation effects. In addition, we consider SGSlog and USGSlog, versions where $\sigma = (\log n)^2$, i.e., much smaller.\footnote{This is not done for SG and USG since, the scalarized version showed the better results in preliminary experiments.} This is done to investigate whether we can achieve similar performance by replacing very small parts of the graphs. Note that this setting can lead to substantially decreased worst-case running times for SGSlog in comparison to SGS on sparse graphs, but does not affect the worst-case running time of USGSlog at all (recall the runtime results in Section~\ref{sec:mutation}).
As further baseline, we adopt the simple weighted sum approach~(WEIGHTED SUM) described in Section~\ref{sec:analysis}, which generates a subset of the supported efficient solutions of the Pareto-front by solving single-objective MST-problems for equidistantly spread weights $\lambda_k \in [0, 1], k = 1, \ldots, 50\,000$.
In addition we run Corley's algorithm~\citep{Corley1985}, a straight-forward exact extension of Prim's single-objective MST algorithm, once on every instance with a time limit of 48h and 16GB RAM. Corley terminated in only 13 out of 160 runs on small C4 instances only (all other runs were prematurely terminated due to excessive memory usage; this is unsurprising since the number of partial non-dominated solutions can grow exponentially) and those solutions were also discovered by the weighted-sum approach. As a consequence we do not consider Corley's algorithm in the following evaluation since its results are subsumed by WEIGHTED SUM.
We choose a population size of $100$ independent of the instance size $n$, perform $30$ independent runs of each EMOA/mutation combination on each problem instance for statistical soundness of subsequent evaluations and use $1000 \cdot n$ function evaluations as the only termination condition. Progress is monitored at $10\%, 50\%$ and $100\%$ of allowed total function evaluations. 

All algorithms including mutation operators are implemented in the R package \texttt{mcMST}~\citep{Bo17}.\footnote{\url{https://github.com/jakobbossek/mcMST}} 
Data and code can be found in a public GitHub repository (see Appendix~\ref{sec:experiments_details} for more details on the experiments).\footnote{\url{https://github.com/jakobbossek/ECJ-MOMST-Subgraph-Mutation}}

\begin{figure}[t]
    \centering
    \includegraphics[width=\columnwidth]{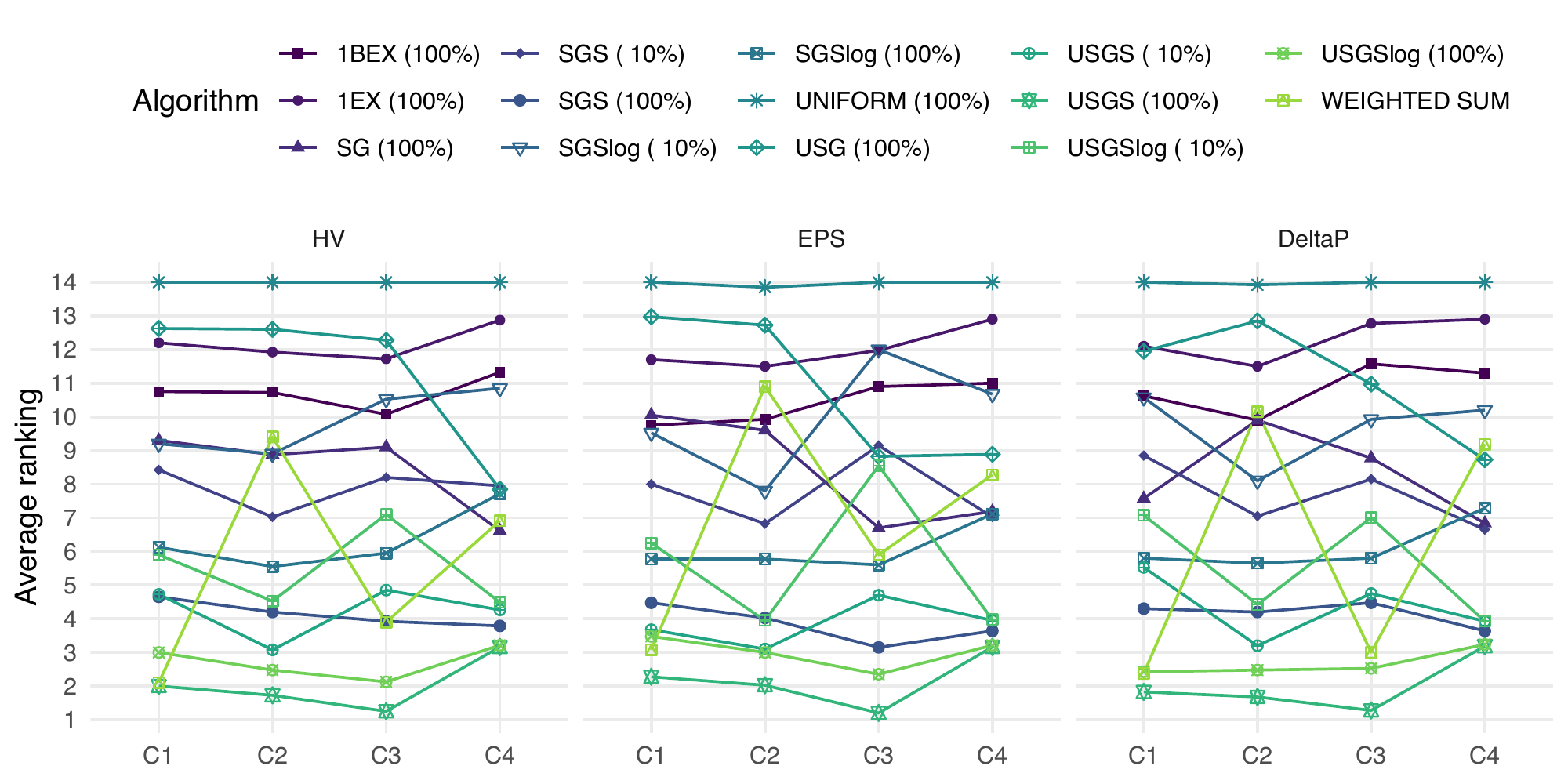}
    \caption{Mean rankings (lower is better) of indicators split by class and colored by algorithm/mutation. In the legend, the values in parentheses indicates the percentage of maximum function evaluations.}
    \label{fig:indicator_mean_ranks}
\end{figure}

\subsection{Experimental analysis}

The comparison of algorithm performance
is shown via a ranking in Fig.~\ref{fig:indicator_mean_ranks} Therein, the average ranking (the lower the better) of mutation operators is shown for all classes, all instances and all runs with respect to the hypervolume indicator~(HV), the $\varepsilon$-indicator~\citep{ZTLF03}, and $\Delta p$~\citep{SELC12}. The first two indicators predominantly measure convergence of a solution set towards the true Pareto-front. While the hypervolume indicator measures the overall volume enclosed between the non-dominated solutions and a reference set (possibly the Pareto-front) in objective space, epsilon dominance basically uses the minimum translation of a solution set in objective space such that the reference set is covered as measure for convergence. The $\Delta p$ measure focuses on the diversity and (even) distribution of solution sets. For each instance, the reference set is approximated by the non-dominated points in the union of all approximations obtained in all runs and all algorithms on the particular instance.

We observe that the UNIFORM mutation based on Pr\"ufer number encoding is inferior to all other operators. That is in line with earlier findings~\citep{Gottlieb2001,BG2017ParetoBeneficial}. For the simple edge exchange mutation (1EX) we can confirm our theoretical thoughts from Section~\ref{sec:analysis}: the deactivation of constant edges (edges that occur in all or most optimal solutions) deteriorates the performance of this operator. As expected, we find that the biased exchange mutation 1BEX performs better due to inclusion of dominance ranking information.

\begin{figure}[h!]
    \centering
    \includegraphics[width=0.9\columnwidth]{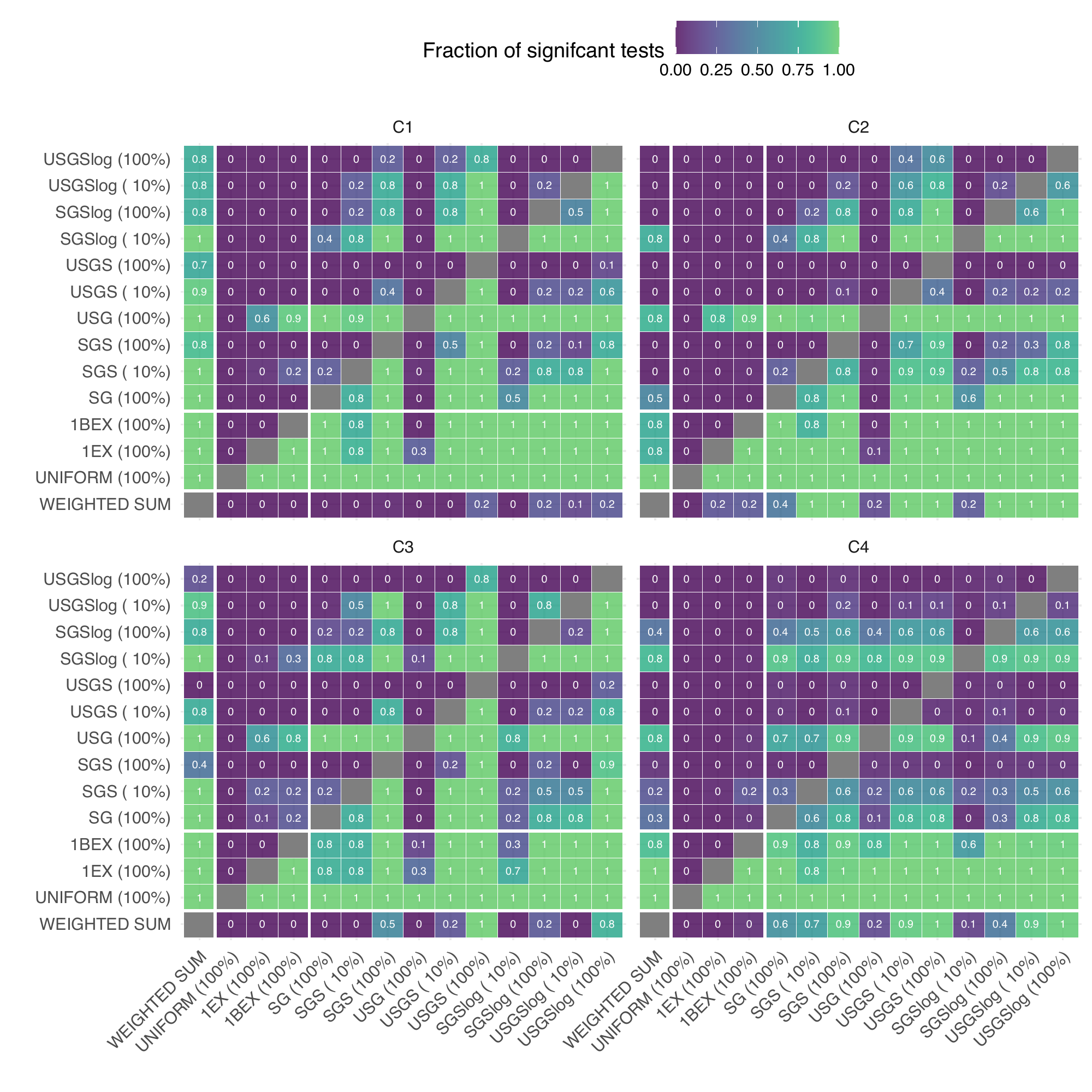}
    \caption{Visualisation of the fraction of pairwise Wilcoxon-Mann-Whitney tests to check whether the column-algorithm has a significantly $(\alpha = 0.01$) lower HV-indicator median value than the row-algorithm. Results are split by graph class. Bonferroni-Holm $p$-value adjustments was applied in order to avoid multiple-testing issues.}
    \label{fig:pairwise_tests_HV}
\end{figure}
Far better performance can, however, be observed for the proposed Pareto-beneficial scalarized operators SGS and USGS. While SGS exposes mediocre performance compared to all approaches, USGS outranks all others. Interestingly for USGS, the results have already good ranks after 10\% of total function evaluations. This compensates (up to a certain degree) for the larger runtime complexity of the operator. 
A detailed view on the pairwise comparison of operator performance in Fig.~\ref{fig:pairwise_tests_HV} shows that USGS~(10\%) and USGS~(100\%) significantly outperform all baseline algorithms, SG, USG, as well as SGS in many cases with respect to hypervolume indicator (see also Fig.~\ref{fig:pairwise_tests_eps_deltap}) for similar plots with respect to the $\varepsilon$-indicator and $\Delta p$).

These findings also hold for the even more runtime-optimized approaches USGSlog and SGSlog, in which the maximum of considered edges is reduced considerably. These operators rank almost as good as their more costly counterparts. In addition, SGS and USGS frequently outperform WEIGHTED-SUM. Note that the latter has a slight advantage in this comparison, since approximation sets of the EMOAs are limited in size by the population size of~100, while there may be way more distinct supported efficient solutions found by WEIGHTED SUM (recall that it is solved for $5\,000$ different weights).

\begin{landscape}
\begin{table}

\caption{\label{tab:indicators_n100}Results for instances with 100 nodes: Mean, standard deviation~(\textbf{sd}) and results of Wilcoxon-Mann-Whitney tests at significance level $\alpha=0.01$ (\textbf{stat}) with respect to HV-indicator and $\varepsilon$-indicator respectively. The \textbf{stat}-column is to be read as follows: a value $X^{+}$ indicates that the indicator for the column algorithm (note that algorithms are numbered and color-encoded in the second row) is significantly lower than the one of algorithm $X$. Lowest indicator values are highlighted in \textbf{bold-face}.}
\centering
\begin{tiny}
\renewcommand{\arraystretch}{1}
\renewcommand{\tabcolsep}{3.7pt}

\end{tiny}
\end{table}

\end{landscape}

\begin{landscape}
\begin{table}[p]

\caption{\label{tab:indicators_n250}Results for instances with 250 nodes: Mean, standard deviation~(\textbf{sd}) and results of Wilcoxon-Mann-Whitney tests at significance level $\alpha=0.01$ (\textbf{stat}) with respect to HV-indicator and $\varepsilon$-indicator respectively. The \textbf{stat}-column is to be read as follows: a value $X^{+}$ indicates that the indicator for the column algorithm (note that algorithms are numbered and color-encoded in the second row) is significantly lower than the one of algorithm $X$. Lowest indicator values are highlighted in \textbf{bold-face}.}
\centering
\begin{tiny}
\renewcommand{\arraystretch}{1}
\renewcommand{\tabcolsep}{3.7pt}

\end{tiny}
\end{table}
\end{landscape}

Table~\ref{tab:indicators_n100} and Table~\ref{tab:indicators_n250} show detailed results with respect to the HV- and $\varepsilon$-indicators for all problem instances with $100$ and $250$ nodes, respectively. The tables show mean values (rounded to four digits), standard deviations (rounded to two digits) and results of pairwise Wilcoxon-Mann-Whitney tests (with Bonferroni-Holm $p$-value adjustment) at significance level of $\alpha=0.01$ for the subset of baseline algorithms UNIFORM and 1BEX and the sub-graph based SGS and  USGS operators. Peeking at the results, we can see that USGS and SGS outperform the baseline algorithms in $100\%$ of the cases and the results are significant. Moreover, the standard deviation is equal to $0.00$ indicating a high robustness.

Fig.~\ref{fig:convergence} shows exemplary HV-indicator convergence plots ($x$-axis on log-scale) for one instance of each class. Here, all algorithms were started with the same initial population and the reference set was formed as the non-dominated points from the union of the supported efficient solutions calculated by WEIGHTED SUM and all obtained Pareto-front approximation for the instance.
We show the best baseline algorithm 1BEX for comparison only to keep the number of different trajectories visually manageable. We can see the drastic improvement in performance for all sub-graph based algorithms. For the instances of classes C1, C2 and C4 the algorithms are very close to the reference set after only $n$ iterations. For class C3, the algorithms progress slower. The reason is likely the enormous number of Pareto-optimal solutions (cf. Section~\ref{sec:instances} and also the random walks in Fig.~\ref{fig:random_walk_gallery}). 

\begin{figure}[t]
    \centering
    \includegraphics[width=\columnwidth]{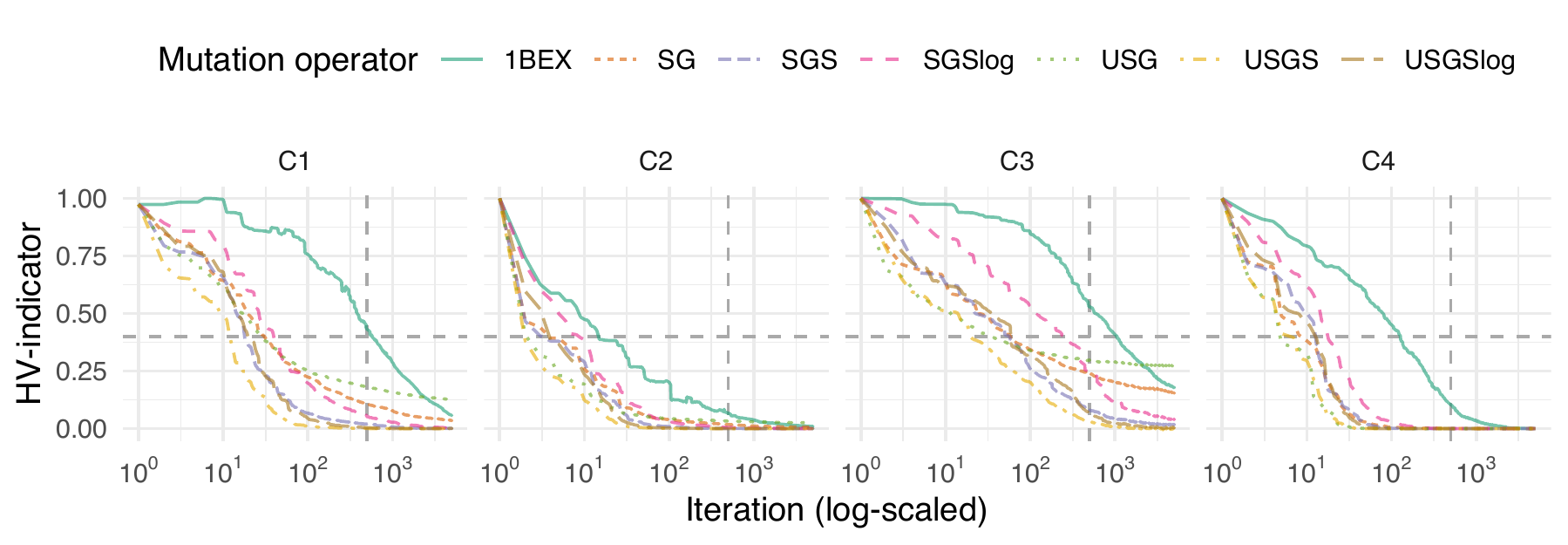}
    \caption{Convergence plots for different mutation operators on each one instance with $n = 100$ nodes from each class. The dashed vertical line indicates $10\%$ budget consumed. Note that the abscissa is on log-scale.}
    \label{fig:convergence}
\end{figure}

\begin{figure}[ht]
    \centering
    \includegraphics[width=\columnwidth]{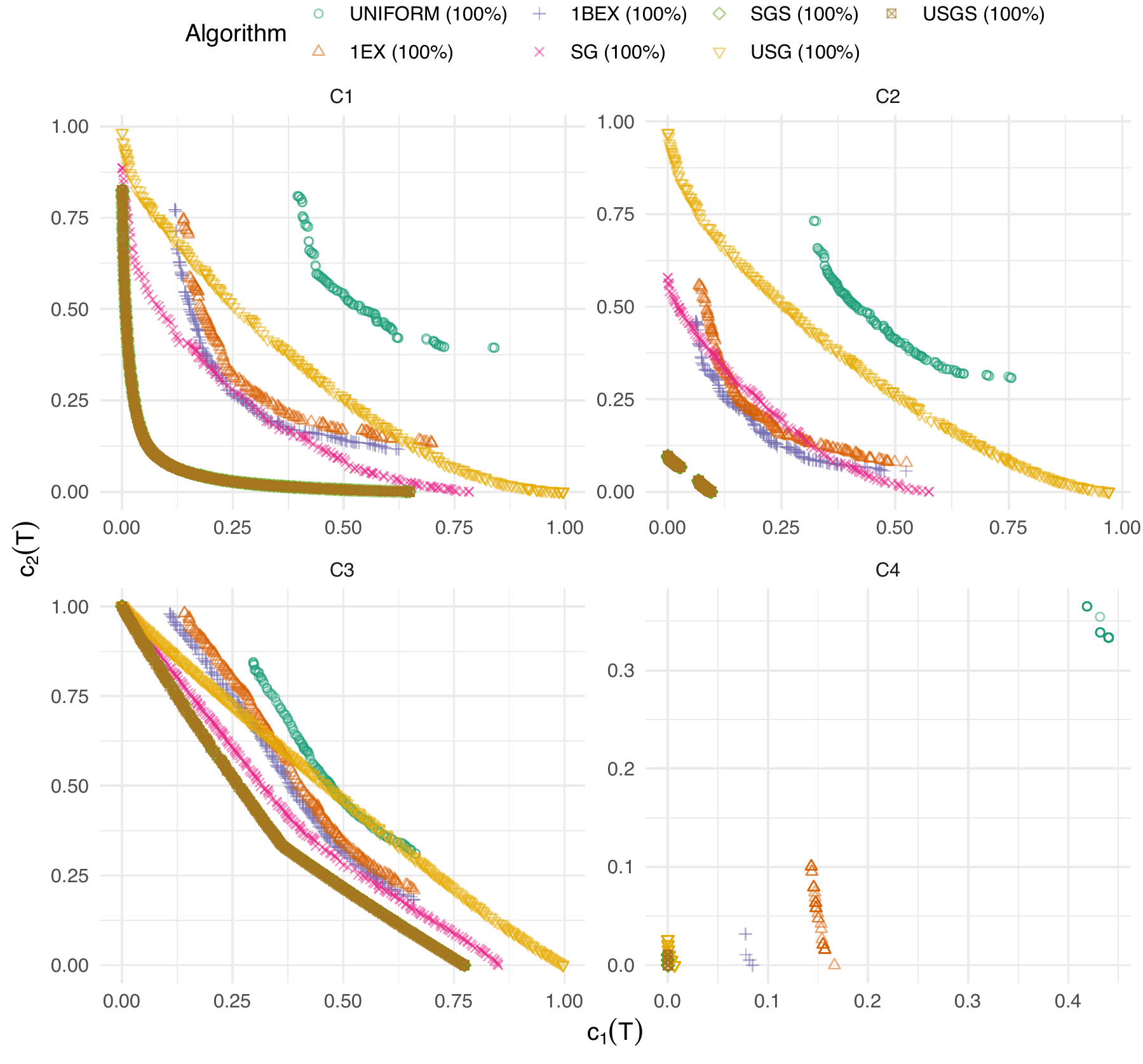}
    \caption{Approximations of the Pareto-front for each one randomly selected instance with $n = 250$ nodes from each considered graph class. We show the non-dominated points over each 30 independent runs of each algorithm. Note that solutions of SGS and USGS heavily overlap.}
    \label{fig:pareto_front_approximations}
\end{figure}

Fig.~\ref{fig:pareto_front_approximations} gives examples of approximated Pareto-fronts. Here, we can clearly see the large differences in the shape and quality of the produced approximations. In particular, it becomes even more obvious, that the scalarized versions (U)SGS are far superior to their counterparts (U)SG and the introduced modification to sample $\lambda$ from the interval $[0,1]$ is of major benefit.

\begin{figure}[t]
    \centering
    \includegraphics[width=\columnwidth]{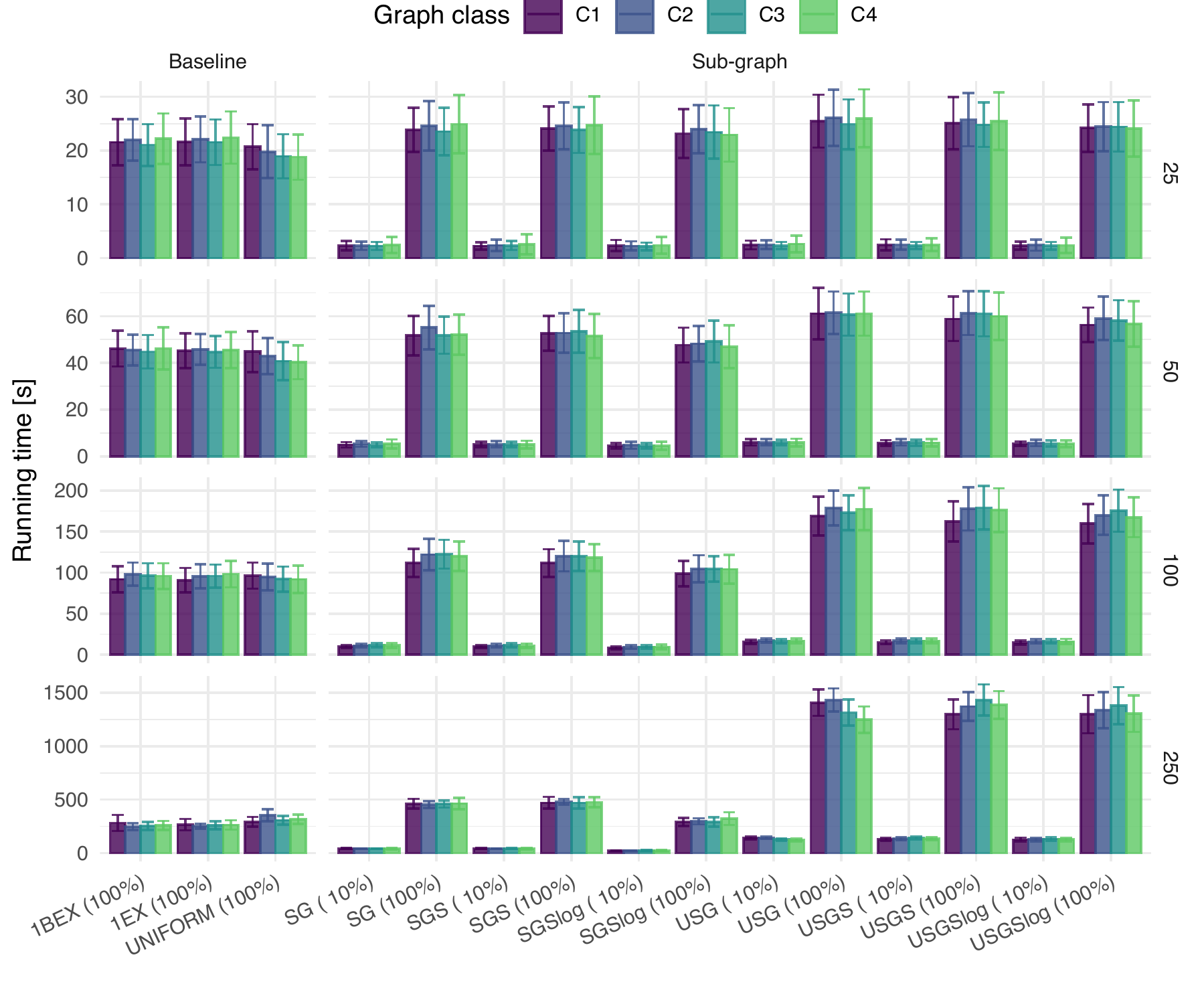}
    \caption{Average running times of all algorithms (with NSGA-II as encapsulating EMOA) in seconds split by instance size $n$ (rows) and colored by graph class.}
    \label{fig:running_times}
\end{figure}

Finally, Fig.~\ref{fig:running_times} empirically details the running times of the algorithms with integrated operators. The left column shows the baseline algorithms (1BEX, 1EX, and UNIFORM), while the average running times of all developed operators are presented in the right column for all instances and problem classes. All standard versions of the considered operators run longer than the baseline as expected.
Moreover, we can observe that with increasing $n$ -- in particular for $n=250$ -- SG(S) operates much faster than USG(S). The logarithmic versions ($\ast$log(100\%)), while approximately matching the running times of the baseline approaches, show different patterns. For USGS, the log-version does not reveal any advantage over the non-log version. For SGS, however, there is a slight advantage for increasing $n$. The advantage is quite low for our setup though, since the considered graphs are complete. Hence, $\Delta = n-1$ and the running time of SG(S) is upper bounded by $O(\sigma \Delta) = O((\log n)^2 n)$. Note that the running times reported are the average running times of the whole NSGA-II algorithm runs, not only the mutation operators. Hence, there is a non-negligible overhead for bookkeeping, survival selection (non-dominated sorting) etc.

It is clear and confirmed in Fig.~\ref{fig:running_times} that the 10\% variants of these approaches run significantly faster than baseline on the considered instance sizes. Thus, we can conclude that the sub-graph based approaches are (1) very competitive in finding high quality solutions, while at the same time (2) matching or even outperforming the running time of the simple baseline operators. Clearly, due to the more complex procedure of computing the sub-graph mutation, there will be a break-even in running time for the new mutation operators and the best baseline (1BEX in $O(n)$) for dense graphs with $n > 250$ and $\sigma = \Theta(n)$.
However, on sparse graphs the computational overhead will be negligible in particular given the rapid convergence behavior.


\section{Conclusion}
\label{sec:conclusion}

In this paper we introduced heavily biased mutation operators, which are designed to solve the multi-objective minimum spanning tree (moMST) problem. These operators can be applied inside any given evolutionary multi-objective algorithm (EMOA) that relies on mutation. The core design idea for the operators stems from an analysis of the composition and neighborhood structure of (supported) efficient spanning trees. In a nutshell all proposed versions of the so-called sub-graph based operators SG(S) and USG(S) (1) drop a random subset of the edges of a spanning tree and (2) reestablish the spanning tree property by applying a single-objective MST algorithm to an induced sub-graph of the input graph. Consequently, the operators are best described as hybrids of random mutation and local search with a focus on avoiding the removal of \enquote{good} edges, i.e., edges which are part of many/most Pareto-optimal spanning trees.
An extensive study performed on $160$ benchmark instances with different Pareto-front characteristics shows the superiority of our proposal when compared to classical evolutionary algorithms from the literature. The sub-graph based algorithms outperform the baseline EMOAs in terms of three multi-objective performance indicators (HV, $\varepsilon$-indicator and $\Delta p$) even for severely reduced computational budget in terms of function evaluations.

We see many avenues for future investigations. First of all, an extension to the many-objective MST problem is interesting. Here, the main challenge is to adapt the sampling of weights for the weighed sum scalarization in a meaningful way to guarantee uniformity. In addition, a modified version of the operators seems promising, e.g., for the multi-objective versions of the degree-constraint MST, the maximum leaf MST or the Steiner-Tree-Problem.
Among the designed operators, first and foremost, the computationally cheaper SGS operator seems to be an adequate candidate to build a Pareto-Local-Search-based optimization method as the random walk experiments in this paper suggest. Eventually, further algorithmic improvements, e.g., dynamic choices of the single-objective solver dependent on the density/sparsity of the induced sub-graph or dynamic adaptation of the number of dropped edges, are promising future research directions worth to pursue.

\section*{Acknowledgments}

J. Bossek and C. Grimme acknowledge support by the European Research Center for Information Systems (ERCIS).

\newpage{}
\appendix

\section{Implementation Details}
\label{sec:implementation_details}

The pseudo-codes in Section~\ref{sec:mutation} are designed primarily with readability in mind; they aim to present the core ideas of the operators on a high level; essential details required for an efficient implementation to reach the bounds in the time complexity proofs were omitted though. We provide much more detailed pseudo-code for the proposed sub-graph mutation operators in this section. 

\subsection{Sub-Graph Mutation}

\begin{algorithm}[H]
\caption{Sub-Graph Mutation (SG).}
\label{alg:sg_finegrained}
\begin{algorithmic}[1]   
\Require{Graph $G = (V, E, c = (c_1, c_2)^{\top})$, spanning tree $T$ of $G$, threshold $\sigma \in \{3, \ldots, n\}$, round $\in \{$true, false$\}$}
    \State $v$ $\gets$ Random node from $V$
    \State $s$ $\gets$ Random integer from $\{3, \ldots, \sigma\}$
    \State $V_s$ $\gets$ \Call{BFS}{$T, v, s$} \Comment{See Algorithm~\ref{alg:sg_finegrained_bfs}}
    \State $\lambda$ $\gets$ Random weight between $0$ and $1$
    \If{round}
        \State $\lambda$ $\gets$ \Call{round}{$\lambda$} \Comment{Special case as introduced in~\cite{BG2017ParetoBeneficial}}
    \EndIf
    \State $G'$ $\gets$ \Call{GetInducedGraph}{$V_s, G, \lambda$} \Comment{See Algorithm~\ref{alg:sg_finegrained_induced}}
    \State $E_s^{*}$ $\gets$ \Call{Kruskal}{$G'$} \Comment{Apply classic Kruskal and extract the edges}
    \State $T^{*}$ $\gets$ \Call{ReplaceEdges}{$T, V_s, E_s^{*}$} \Comment{See Algorithm~\ref{alg:sg_finegrained_mutant}}
    \State \Return{$T^{*}$}
\end{algorithmic}
\end{algorithm}

\begin{algorithm}[H]
\caption{BFS}
\label{alg:sg_finegrained_bfs}
\begin{algorithmic}[1] 
\Require{Spanning tree $T$ of $G$, start node $v \in V$, size limit $s \in \{3, \ldots, n\}$}
\State Initialise empty queue $Q$
\State Q.\Call{Enqueue}{$v$}
\State $visited[v]$ $\gets$ $1$
\State $V_s \gets \{v\}$ \Comment{Set of visited nodes realised, e.g., as a linked list}
\While{$Q$ is not empty}
    \State $v$ $\gets$ Q.\Call{Dequeue}{}
    \For{$w$ in $adj_T(v)$} \Comment{Traverse nodes adjacent to $v$ in $T$}
        \If{$visited[w] = 1$}
            \State \textbf{Next}
        \EndIf
            \State $visited[w]$ $\gets$ $1$
            \State $V_s$ $\gets$ $V_s \cup \{w\}$
            \If{$|V_s|$ equals $s$}
                \State \Return{$V_s$} \Comment{Quit once $s$ nodes have been visited}
            \EndIf
            \State Q.\Call{Enqueue}{$w$}
    \EndFor
\EndWhile
\end{algorithmic}
\end{algorithm}

We start with the SG mutation where Algorithm~\ref{alg:sg} essentially hides details of the sub-routines. We thus provide more detailed pseudo-code of (i) SG itself in Algorithm~\ref{alg:sg_finegrained}, (ii) the BFS-call in Algorithm~\ref{alg:sg_finegrained_bfs}, (iii) the generation of the induced subgraph $G'$ in Algorithm~\ref{alg:sg_finegrained_induced} and (iv) the the final generation of the mutant in Algorithm~\ref{alg:sg_finegrained_mutant}.

We now explain the details starting with the BFS-traversal in Algorithm~\ref{alg:sg_finegrained_bfs}. An important note is that BFS and the other algorithms have access to a global array \verb|visited| of length $n$ which is initialised once prior to the first mutation to \verb|visited|$[v] = 0$ for all $v \in V$. This array is used internally by the BFS algorithm to mark already visited nodes and later on it is used to query in constant time whether nodes were selected or not. In the pseudo-codes we interpret \verb|visited|$[v]=1$ as Boolean \texttt{TRUE} and \texttt{FALSE} otherwise. BFS initialises an empty first-in-first-out queue $Q$ and adds the starting node $v$ to the queue and to a set (implemented, e.g., via a linked-list) $V_s$. Alongside, $v$ is marked as visited. Then, as long as the queue is not empty, a node $v$ is dequeued and $v$'s adjacency list in $G$ is traversed. If neighbor $w$ is spotted for the first time, it is added to $V_s$ and marked as visited. Once $s$ nodes have been visited the algorithm terminates prematurely and returns $V_s$ (and implicitly \verb|visited|).

\begin{algorithm}[H]
\caption{\textsc{GetInducedSubgraph}}
\label{alg:sg_finegrained_induced}
\begin{algorithmic}[1] 
\Require{Node set $V_s$, graph $G = (V, E, c = (c_1, c_2)^{\top})$, scalar weight $\lambda \in [0,1]$}
\State $G'$ $\gets$ $(V_s, E' = \emptyset)$
\For{$v$ in $V_s$}
    \For{$w$ in $adj_G(v)$} \Comment{Iterate over all neighbors of $v$ in $G$}
        \State $e \gets \{v, w\} \in E$
        \If{$visited[v]$ \textbf{ and } $visited[w]$}
            \State $E'$ $\gets$ $E' \cup \{e\}$
            \State $c^s(e)$ $\gets$ $\lambda c_1(e) + (1-\lambda) c_2(e)$ \Comment{Scalarise edge weights}
        \EndIf
    \EndFor
\EndFor
\State \Return{$G'$}
\end{algorithmic}
\end{algorithm}

Once BFS is done the SG-mutation in Algorithm~\ref{alg:sg_finegrained} samples the weight for scalarisation and continues building the induced sub-graph $G'$. The details of the latter process are given in Algorithm~\ref{alg:sg_finegrained_induced}: $G'$ is initialised with $V_s$ as its node set and an empty edge set. Next, $V_s$, the set of sampled nodes, is traversed and for each $v \in V_s$ the inner-loop traverses $v$'s neighbors in the input graph $G$. If both nodes are marked as visited the edge is added to $G'$ and the edge weight is scalarised. Note that the constant-time checks via the \verb|visited| array are essential for efficiency. Note further that a straight-forward mapping is required to map the nodes from $G'$ to their counterparts in $G$ and vice versa.

\begin{algorithm}[H]
\caption{\textsc{ReplaceEdges}}
\label{alg:sg_finegrained_mutant}
\begin{algorithmic}[1] 
\Require{Parent ST $T$, subset of nodes $V_s$, optimal edges set $E_s^{*}$ between nodes in $V_s$}
\State $T' \gets (V, \emptyset{})$ \Comment{Initialise empty spanning tree}
\For{$e = \{v, w\} \in E(T)$} \Comment{Copy untouched edges of the parent tree}
    \If{\textbf{not } visited$[v]$ \textbf{ or } \textbf{not }visited$[w]$}
        \State $E(T') \gets E(T') \cup \{e\}$
    \EndIf
\EndFor
\State $E(T') \gets E(T') \cup E_s^{*}$ \Comment{Add spanning tree edges of the induced sub-graph to complete the solution}
\State \Return{$T'$}
\end{algorithmic}
\end{algorithm}

Next, Kruskal's algorithm is applied to $G'$ without modifications. Eventually, the mutant is put together by first copying all edges which are not part of the induced sub-graph $G'$ from the parent to the mutant and subsequently adding the spanning-tree edges of the induced sub-graph (see Algorithm~\ref{alg:sg_finegrained_mutant}). This step completes the mutation.

\subsection{Unconnected Sub-Graph Mutation}

\begin{algorithm}[t]
\caption{Unconnected Sub-graph Mutation (USG)}
\label{alg:usg_finegrained}
\begin{algorithmic}[1]
    \Require{Graph $G = (V, E, c = (c_1, c_2)^{\top})$, spanning tree $T$ of $G$,\newline{} threshold $\sigma \in \{1, \ldots, n-1\}$, round $\in \{$true, false$\}$}
    \State $s$ $\gets$ Random integer from $\{1, \ldots, \sigma\}$
    \State $D$ $\gets$ array of length $n-1$ initialised to zero entries \Comment{$D[i] = 1$ indicates that the $i$th edge should be \underline{dropped} from $T$}
    \State $D[i] \gets 1$ for $s$ randomly sampled positions $i \in \{i_1, \ldots, i_s\}$
    \State $T' \gets (V, E')$ with $e_i = \{u,v\} \in E(T')$ if $D[i] = 0$ \Comment{$T'$ is a spanning forest, i.e., a set of trees without the dropped edges; $T'$ will later be augmented with edges to build of spanning tree of $G$}
    \State Initialise union find data structure $UF$ on $V = \{1, \ldots, n\}$
    \State $UF$.\Call{Union}{$v$, $w$} for $\{v,w\} \in T'$ \Comment{I.e., union all components already linked in the spanning forest $T'$}
    \State $\lambda$ $\gets$ Random weight between $0$ and $1$
    \If{round}
        \State $\lambda$ $\gets$ \Call{round}{$\lambda$}
    \EndIf
    \State $G'$ $\gets$ $(V, E, \lambda c_1 + (1 - \lambda) c_2)$ \Comment{Scalarise all edge weights in $G$}
    \State $T^{*}$ $\gets$ \Call{Kruskal}{$G', T', UF$} \Comment{Apply Kruskal to make a ST based on $T'$}
    \State \Return{$T^{*}$}
\end{algorithmic}
\end{algorithm}

This algorithm is simpler than SG implementation-wise. Nevertheless, we feel that the additional details in Algorithm~\ref{alg:usg_finegrained} are beneficial for understanding. The algorithm first samples a random integer $s \in \{1, \dots, \sigma\}$ which defines the number of edges that will be dropped from the parent spanning tree $T = (V, E_T)$. Next, a utility data-structure $D$ is initialised as an array of length $n-1$ with all zero entries and $s$ random entries are set to $1$. The edges $e_i$ corresponding to these entries are now dropped from $T$ in the next step to produce $T' = (V, E')$. Since $T$ is a spanning tree of $G$, removing $s$ edges from $T$ results in $T'$ being a \emph{spanning forest}, i.e., a set of trees, with $s+1$ connected components. $T'$ serves as a starting point for a modified version of Kruskal's MST algorithm which needs to connect these $s+1$ components into a single one. To this end, the next step is to initialise a Union-Find data-structure that supports efficient union and find operations on sets~(see, e.g., \cite{cormen2009IntroductionToAlgorithms}). We first initialise the union find data-structure $UF$ on the set of nodes $V=\{1, \ldots, n\}$ of the input graph $G$. Subsequently, we iterate over all non-excluded edges $e_i = \{v,w\} \in T'$, i.e., edges $e_i$ with $D[i]=0$, and union the corresponding sets prior to the Kruskal-run; these ensures that non-excluded edges remain in the mutant. The final preparation step is to make a copy $G'$ of the input graph $G$ where all edges have scalar weights. Here, the sampled weight $\lambda \in [0,1]$ comes into play. Eventually, Kruskal's algorithm is started with the initial spanning forest $T'$ and the Union-Find data-structure $UF$ that encodes the interconnection structure of $T'$ as parameters alongside the edge-scalarised graph $G'$. The resulting spanning tree $T^{*}$ is the mutant.

\section{More details on the experimental setup}
\label{sec:experiments_details}

Our experimental pipeline is implemented in the statistical programming language R~\citep{Rlanguage}. The package \pkg{grapherator}~\citep{B2018grapherator} was used to generate the benchmark instances sets. Package \pkg{mcMST}~\citep{Bo17} was extended by the proposed mutation operators; the package itself is written in R, but the time-critical operators are implemented in C++ for performance reasons. 
For multi-objective performance assessment we used the evolutionary computation framework \pkg{ecr}~\citep{B2017ecr}. Due to the multitude of instances, we run experiments in parallel on the high performance cluster PALMA2 of the University of M\"unster. The calculations were performed on Skylake 
(Gold 6140) compute nodes with 4GB of RAM. To ease the job-management we used the \pkg{batchtools}~\citep{LBS2017_batchtools} library.

\subsection{Data repository}

Code, data, and additional plots can be found in a public Github repository: \url{https://github.com/jakobbossek/ECJ-MOMST-Subgraph-Mutation}. This repository contains a couple of folders whose content is described briefly in the following. More information can be found in the repository.
\begin{description}
    \item[\texttt{figures/}] Additional plots
    \begin{description}
        \item[\texttt{figures/apriori\_analysis/}] contains heatmaps and graph embeddings supporting the observations made in Section~\ref{sec:analysis}.
        \item[\texttt{figures/random\_walks/}] contains further examples of random walks with different parameter settings (see Section~\ref{sec:mutation}).
        \item[\texttt{figures/benchmark/}] all remaining plots (instances, heat-maps, etc.) supporting Section~\ref{sec:experiments}.
    \end{description}
    \item[\texttt{instances/}] the set of benchmark instances.
    \item[\texttt{src/}] R source files shared by multiple experiments.
\end{description}
Moreover, an implementation of the algorithms in C++ can be found in the R package \pkg{mcMST}~\citep{Bo17}.

\section{Supplementary experimental results}

Fig.~\ref{fig:pairwise_tests_eps_deltap} visualises the fraction of non-parametric Wilcoxon-Mann-Whitney tests for each pair of algorithms/operators for the $\varepsilon$-indicator and the $\Delta p$-indicator. The observations are very similar to what we saw in Fig.~\ref{fig:pairwise_tests_HV}.
Tables~\ref{tab:indicators_n25} and \ref{tab:indicators_n50} show fine-grained results on the HV-indicator and the $\varepsilon$-indicator on all instances of size 25 and 50 respectively (analogous to Tables~\ref{tab:indicators_n100} and \ref{tab:indicators_n250}).

\begin{landscape}
\begin{figure}[h!]
    \centering
    \includegraphics[width=0.77\textwidth]{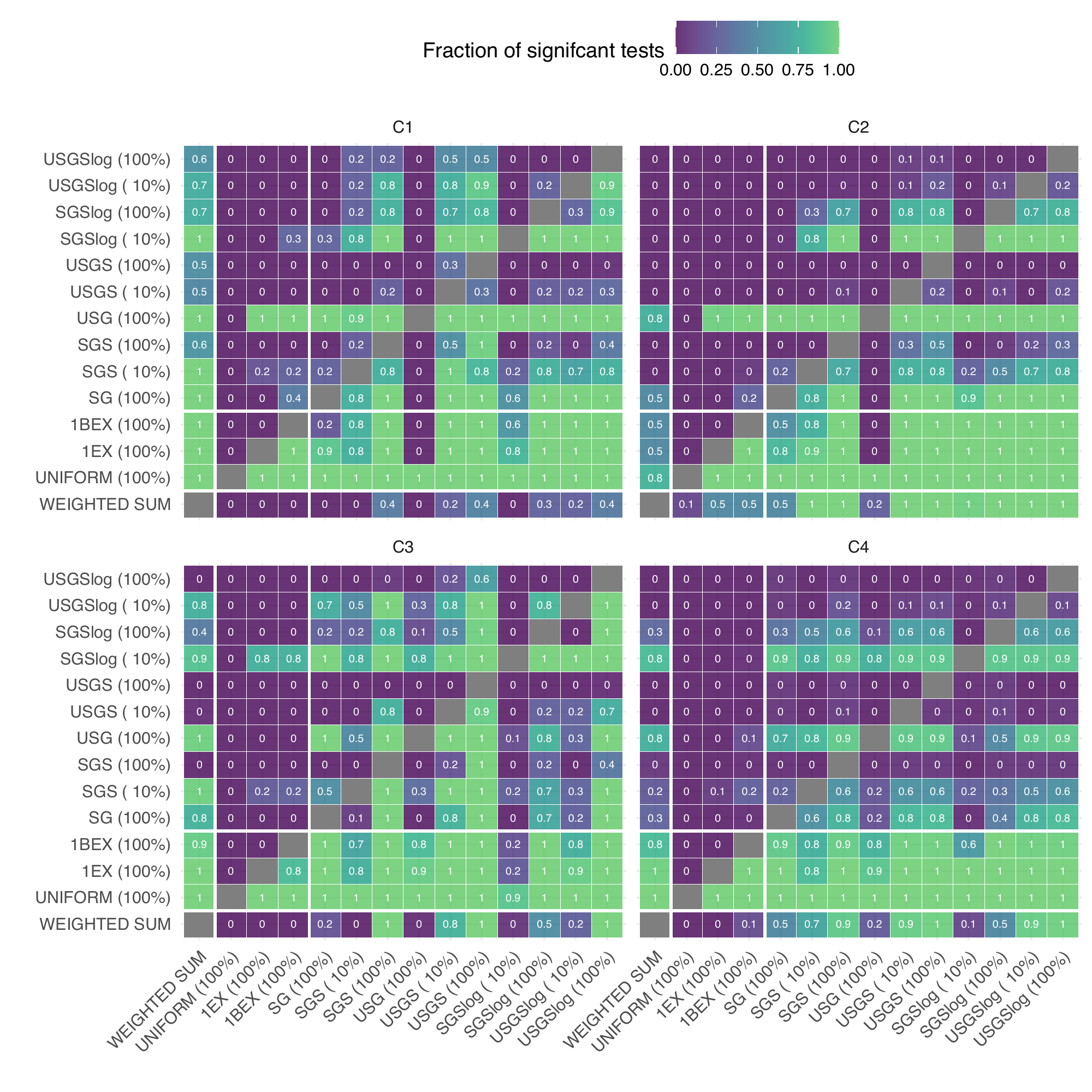}
    \includegraphics[width=0.77\textwidth]{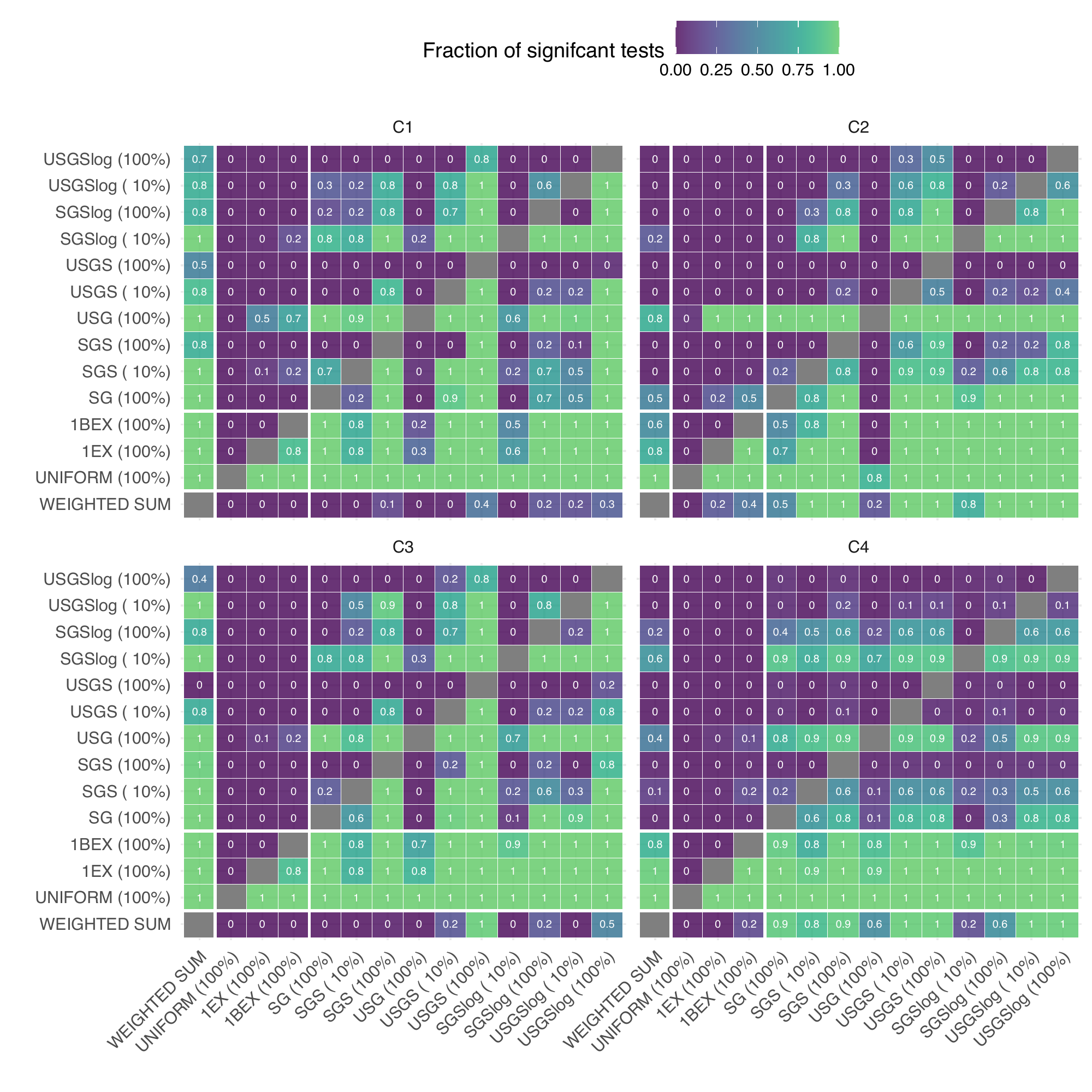}
    \caption{Visualisation of the fraction of pairwise Wilcoxon-Mann-Whitney tests to check whether the column-algorithm has a significantly $(\alpha = 0.01$) lower $\epsilon$-indicator (left) or $\Delta p$-indicator (right) median value than the row-algorithm. Results are split by graph class. Bonferroni-Holm $p$-value adjustments was applied in order to avoid multiple-testing issues.}
    \label{fig:pairwise_tests_eps_deltap}
\end{figure}
\end{landscape}


\begin{landscape}
\begin{table}

\caption{\label{tab:indicators_n25}Results for instances with 25 nodes: Mean, standard deviation~(\textbf{sd}) and results of Wilcoxon-Mann-Whitney tests at significance level $\alpha=0.01$ (\textbf{stat}) with respect to HV-indicator and $\varepsilon$-indicator respectively. The \textbf{stat}-column is to be read as follows: a value $X^{+}$ indicates that the indicator for the column algorithm (note that algorithms are numbered and color-encoded in the second row) is significantly lower than the one of algorithm $X$. Lowest indicator values are highlighted in \textbf{bold-face}.}
\centering
\begin{tiny}
\renewcommand{\arraystretch}{1}
\renewcommand{\tabcolsep}{2.8pt}

\end{tiny}
\end{table}

\end{landscape}

\begin{landscape}
\begin{table}

\caption{\label{tab:indicators_n50}Results for instances with 50 nodes: Mean, standard deviation~(\textbf{sd}) and results of Wilcoxon-Mann-Whitney tests at significance level $\alpha=0.01$ (\textbf{stat}) with respect to HV-indicator and $\varepsilon$-indicator respectively. The \textbf{stat}-column is to be read as follows: a value $X^{+}$ indicates that the indicator for the column algorithm (note that algorithms are numbered and color-encoded in the second row) is significantly lower than the one of algorithm $X$. Lowest indicator values are highlighted in \textbf{bold-face}.}
\centering
\begin{tiny}
\renewcommand{\arraystretch}{1}
\renewcommand{\tabcolsep}{3.7pt}

\end{tiny}
\end{table}
\end{landscape}

\small
\bibliographystyle{apalike}
\bibliography{bib}

\end{document}